%% file: main.tex
\documentclass{article}

\PassOptionsToPackage{numbers}{natbib}

\usepackage[preprint]{main}

\usepackage[utf8]{inputenc} 
\usepackage[T1]{fontenc}    
\usepackage{url}            
\usepackage{booktabs}       
\usepackage{amsfonts}       
\usepackage{nicefrac}       
\usepackage{microtype}      
\usepackage{xcolor}         

\usepackage{graphicx}
\usepackage{enumitem}
\usepackage{threeparttable}
\usepackage{multirow}
\usepackage{amsmath,amssymb,amsthm}
\usepackage{algorithm}
\usepackage{algpseudocode}

\theoremstyle{plain}
\newtheorem{theorem}{Theorem}
\newtheorem{proposition}{Proposition}

\theoremstyle{definition}

\theoremstyle{remark}

\usepackage{multirow}
\usepackage{makecell}

\usepackage[utf8]{inputenc}
\usepackage[T1]{fontenc}
\usepackage{times}
\usepackage{microtype}
\usepackage{amsmath,amssymb,amsfonts}
\usepackage{booktabs}
\usepackage{colortbl}
\usepackage[table]{xcolor}
\usepackage{tcolorbox}
\usepackage{pifont}
\usepackage{geometry}
\usepackage{array}
\usepackage{caption}
\usepackage{listings}

\usepackage{fancyhdr}
\usepackage{graphicx}
\usepackage{xcolor}

\tcbuselibrary{skins,breakable,listings}

\newcommand{\cmark}{\ding{51}}
\newcommand{\xmark}{\ding{55}}

\lstdefinestyle{pythonstyle}{
  language=Python,
  basicstyle=\ttfamily\scriptsize,
  keywordstyle=\color{blue!70!black}\bfseries,
  stringstyle=\color{red!60!black},
  commentstyle=\color{green!50!black}\itshape,
  breaklines=true,
  frame=none,
  xleftmargin=2pt,
  aboveskip=2pt,
  belowskip=2pt,
}

\usepackage[colorlinks=true,
            citecolor=blue,
            linkcolor=red,
            anchorcolor=green]{hyperref}

\definecolor{sftgray}{RGB}{235,235,235}
\definecolor{simctblue}{RGB}{232,242,255}
\newcommand{\sftc}[1]{\cellcolor{sftgray}{#1}}
\newcommand{\simctc}[1]{\cellcolor{simctblue}{#1}}

\usepackage{array}
\newcolumntype{L}[1]{>{\raggedright\arraybackslash}p{#1}}

\title{SimCT: Recovering Lost Supervision for Cross-Tokenizer On-Policy Distillation}

\author{%
  \textbf{Jie Sun}$^{1,3,*,\ddagger}$, 
  \textbf{Mao Zheng}$^{2,*}$, 
  \textbf{Mingyang Song}$^{2,*}$, 
  \textbf{Qiyong Zhong}$^{1}$ \\
  \textbf{Yilin Cheng}$^{4}$, \textbf{Bichuan Feng}$^{4}$, \textbf{Pengfei Liu}$^{3}$, 
  \textbf{Junfeng Fang}$^{1,\dagger}$, 
  \textbf{Xiang Wang}$^{1,\dagger}$ \\[4pt]
  $^{1}$\,University of Science and Technology of China \quad
  $^{2}$\,Large Language Model Department, Tencent \\
  $^{3}$\,Shanghai Innovation Institute \quad
  $^{4}$\,Zhongguancun Academy \\[4pt]
  \texttt{sunjie2019@mail.ustc.edu.cn}
}

\fancypagestyle{firstpagestyle}{%
  \fancyhf{}  
  \fancyhead[L]{\includegraphics[height=0.66cm]{figures/Tencent_HY.png}}
  \fancyhead[R]{\textcolor{gray}{\fontsize{11pt}{10pt}\selectfont May 8, 2026}}

}

\begin{document}

\maketitle
\thispagestyle{firstpagestyle}

\renewcommand{\thefootnote}{}
\footnotetext{$^{*}$\,Equal Contribution.\quad $^{\dagger}$\,Corresponding Authors.\quad $^{\ddagger}$\,Work done during internship at Tencent.}
\renewcommand{\thefootnote}{\arabic{footnote}}

\input{chapters/0-abs}

\input{chapters/1-intro}

\input{chapters/2-related_work}

\input{chapters/3-method}

\input{chapters/4-experiment}

\input{chapters/6-limitation}

\input{chapters/5-conclusion}

\newpage

\bibliographystyle{unsrtnat}
\bibliography{main}

\newpage
\appendix

\input{chapters/7-apd-data}

\input{chapters/7-apd-train}

\input{chapters/7-apd-eval}

\input{chapters/7-apd-theorem}

\input{chapters/7-apd-computing_time}

\input{chapters/7-apd-case_study}
\input{chapters/7-apd-related}

\end{document}

%% file: chapters/0-abs.tex
\begin{abstract}
On-policy distillation (OPD) is a standard tool for transferring teacher behavior to a smaller student, but it implicitly assumes that teacher and student predictions are comparable token by token, an assumption that fails whenever the two models tokenize the same text differently.
Under heterogeneous tokenizers, exact shared-token matching silently discards a large fraction of the teacher signal at precisely the positions where vocabularies disagree.
We propose \textbf{\underline{Sim}ple \underline{C}ross-\underline{T}okenizer OPD (SimCT)}, which restores this signal by enlarging the supervision space: alongside shared tokens, SimCT compares teacher and student over short multi-token continuations that both tokenizers can realize, leaving the OPD loss form itself unchanged.
We show that these units are the finest jointly tokenizable supervision interface, and that coarser alternatives remove teacher-student distinctions that are useful for on-policy learning.
Across three heterogeneous teacher-student pairs on mathematical reasoning and code-generation benchmarks, SimCT shows consistent gains over shared-vocabulary OPD and representative cross-tokenizer baselines, with ablations confirming that the improvements come from recovering supervision discarded by exact shared-token matching.
Code is available at \href{https://github.com/sunjie279/SimCT-}{https://github.com/sunjie279/SimCT-}.

\end{abstract}

%% file: chapters/1-intro.tex
\section{Introduction}
\label{sec:intro}

Knowledge distillation transfers the behavior of a strong teacher to a smaller student and remains central to efficient LLM deployment~\citep{hinton2015distilling_kd, zhu2024compression_mc, xu2024surveykdllm_kd, yang2024kd_survey}.
For autoregressive LLMs, however, conventional offline knowledge distillation trains the student on fixed contexts rather than on states induced by its own generations~\citep{sanh2019distilbert_kd, jiao2020tinybert_kd, muralidharan2024minitron_mc}, so early generation errors can compound and move the student into contexts that were rarely supervised during training~\citep{bengio2015scheduled_eb, zhang2019bridging_eb, lin2020imitkd, cideron2024sequencematch}.
On-policy distillation (OPD) addresses this train--test mismatch by querying the teacher on prefixes sampled from the student's policy, following the broader principle of on-policy imitation learning~\citep{ross2011dagger_eb, agarwal2024onpolicy_opsd, gu2024minillm_opd, ko2024distillm_opd, ko2025distillm2_opd}.
This makes OPD a natural paradigm for LLM distillation~\citep{ye2026blackbox, zhang2026lightningopd, liu2026stableopd, li2026rethinkingopd, ye2026opcd}, but it also exposes a hidden requirement for cross-model supervision: at each student-generated prefix, teacher and student predictions must be defined over comparable prediction units.

This requirement is non-trivial in realistic LLM distillation, where teacher and student models come from different families and use different tokenizers~\citep{song2026survey}.
The mismatch arises at two levels.
First, at the vocabulary level: as shown in Figure~\ref{fig:teaser}(A), modern LLM tokenizers share only a limited portion of their vocabularies across model families, so restricting supervision to tokens that appear in both vocabularies leaves most of the teacher's output distribution unused.
Second, at the sequence level: the same text may be segmented into different token sequences, so the teacher's next-token distribution at a student-generated prefix may not supervise the same text unit as the student's next-token distribution~\citep{boizard2025uld, zhang2024dskd, ngo2026ctpd, singh2026bld, phan2026ctls}.
As illustrated in Figure~\ref{fig:teaser}(B), this sequence mismatch breaks a basic assumption of standard OPD, namely that teacher and student predict over the same text unit at every supervised position.

\begin{figure}[t]
 \centering
 \includegraphics[width=1.0\linewidth]{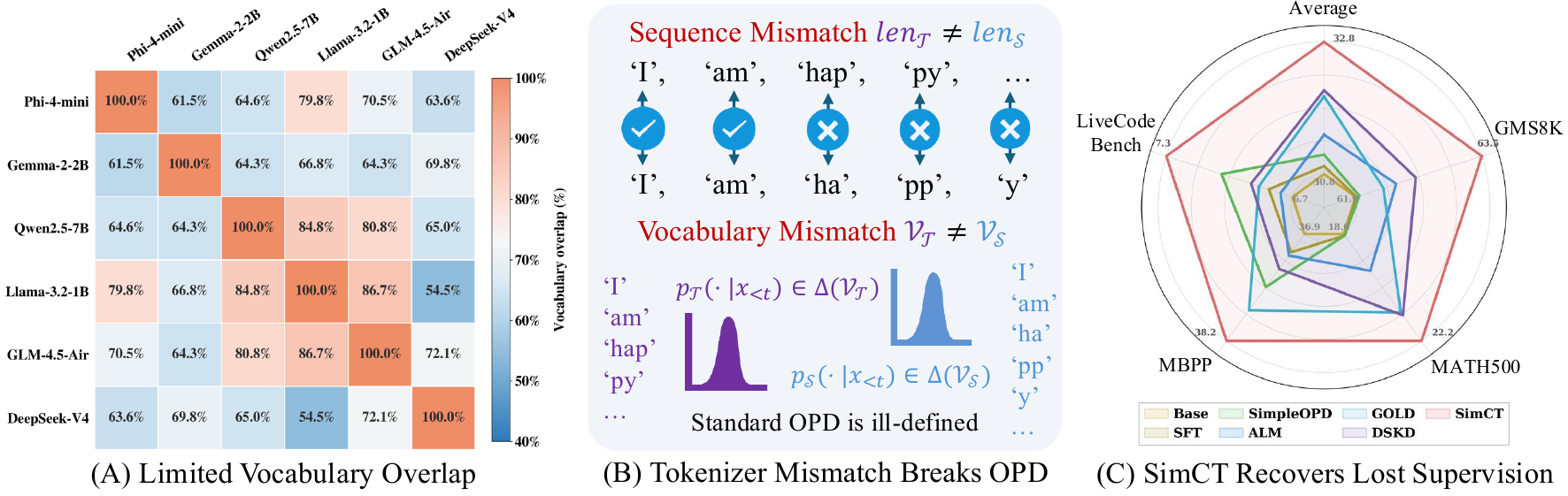}
 \caption{
 \textbf{Motivation and performance of SimCT.}
 \textbf{(A)} LLM tokenizers often share only partial vocabularies across model families, making shared-token distillation restrictive.
 \textbf{(B)} The same text can induce different token boundaries and prediction spaces, so standard token-level OPD is ill-defined.
 \textbf{(C)} SimCT constructs a common aligned supervision space and achieves the best average Pass@1 over SFT and prior cross-tokenizer baselines.
 Each axis is independently normalized for visualization; absolute results with standard deviations are reported in Table~\ref{tab:main_results_compact}.
 }
 \label{fig:teaser}
\end{figure}

Existing cross-tokenizer OPD methods fall into three categories.
Distribution-alignment methods align heterogeneous logits through optimal transport, between token distributions~\citep{boizard2025uld}, jointly at token and sequence levels~\citep{cui2025multilevel}, or in on-policy extensions~\citep{patino2025gold}, but require transport computation at every supervised position.
Representation-transfer methods unify the output spaces through learned cross-model projections~\citep{zhang2024dskd,zhang2025dskdv2}, at the cost of additional trainable parameters.
Token-bypassing methods abstract above the token level via chunk likelihoods~\citep{minixhofer2025alm}, BPE structure~\citep{phan2026ctls}, dynamic vocabulary mapping~\citep{chen2025cdm}, or byte-level decoders~\citep{singh2026bld}, sidestepping token mismatch but supervising at coarser granularity.
Across all three categories, teacher and student predictions are made comparable through added machinery, yet a substantial portion of the teacher-student supervision signal remains beyond reach, and what is recovered comes at substantial cost in computation, parameters, or granularity. 
This raises a natural question: how simple can the supervision interface be while still recovering most of the teacher signal that tokenizer mismatch would otherwise discard?

We propose \textbf{\underline{Sim}ple \underline{C}ross-\underline{T}okenizer OPD (SimCT)}, a cross-tokenizer OPD method that changes the supervision space rather than the OPD objective.
Instead of forcing teacher and student logits to match only on identical tokens, SimCT induces a normalized preference distribution over a common supervision space and applies the standard reverse-KL OPD loss there~\citep{ko2024distillm_opd, jin2026entropy_opd}.
The space contains shared tokens and \emph{minimal aligned units} recovered from tokenizer-mismatched text.
A minimal aligned unit is the finest shared text continuation that both tokenizers can express, even if they realize it with different token sequences.
SimCT requires no additional parameters, no learned projections, and no transport computation, since the supervision space is constructed directly from the two existing tokenizers. It recovers teacher signal that shared-vocabulary OPD discards, while still avoiding regions where the two tokenizers fundamentally disagree. And because the loss form is unchanged, any gain over shared-vocabulary OPD can be attributed to the supervision interface itself rather than to a new objective.
As previewed in Figure~\ref{fig:teaser}(C), this supervision-space expansion substantially improves over existing cross-tokenizer OPD baselines.

We evaluate SimCT across three model families on mathematical reasoning and code-generation benchmarks.
SimCT consistently improves over SFT and representative cross-tokenizer OPD baselines, achieving an average relative gain of \textbf{6.0\%} over SFT with modest additional computational cost relative to shared-vocabulary OPD.
Ablations further explain where the gains come from: increasing recovered tokenizer-mismatched text supervision improves performance, whereas coarsening minimal aligned units removes KL supervision signal and reduces downstream gains.
These trends support our theoretical analysis, which characterizes SimCT as the finest boundary-consistent supervision interface jointly expressible by the teacher and student tokenizers, and shows that coarser interfaces can erase useful within-text teacher--student distinctions.
Overall, our results identify supervision-space construction as a key design choice in cross-tokenizer OPD, with minimal aligned units offering a simple way to recover teacher supervision otherwise lost under exact token overlap.

%% file: chapters/2-related_work.tex
\section{Related Work}
\label{sec:related_work}

\paragraph{Knowledge distillation and on-policy distillation.}
Knowledge distillation transfers teacher behavior to a smaller student and has become a standard tool for efficient LLM deployment~\citep{hinton2015distilling_kd,xu2024surveykdllm_kd}, with early work treating it as offline compression on fixed corpora~\citep{sanh2019distilbert_kd,jiao2020tinybert_kd,muralidharan2024minitron_mc}.
Because the student is supervised under a distribution it does not visit at inference~\citep{bengio2015scheduled_eb,lin2020imitkd}, on-policy distillation (OPD) instead queries the teacher on student-generated prefixes~\citep{lu2025onpolicydistillation, agarwal2024onpolicy_opsd}, with subsequent work refining the objective and setting~\citep{gu2024minillm_opd,ko2024distillm_opd,ko2025distillm2_opd,ye2026blackbox,wu2025akl,luong2026drkl,kim2025csd}.
A complementary line analyzes \emph{when} and \emph{where} OPD provides signal: distributional compatibility governs success or failure at the sequence level~\citep{li2026rethinkingopd,fu2026Revisiting}, while token-level studies show that supervision quality varies sharply across positions~\citep{huang2025selectkd,xu2026tip_opd}.
Most of this work assumes a shared token space between teacher and student; extending OPD to heterogeneous tokenization requires additional supervision-interface design, which we discuss next.

\paragraph{Cross-tokenizer knowledge distillation.}
When teacher and student tokenize the same text differently, prior work bridges the vocabulary gap at one of several interfaces.
Optimal-transport methods align heterogeneous logits directly, either between token distributions~\citep{boizard2025uld}, jointly at token and sequence levels~\citep{cui2025multilevel}, or layer-wise with chain-of-thought augmentation~\citep{le2025cot2align}.
Dual-space and representation-matching methods instead unify the output spaces themselves through cross-model attention and its variants~\citep{zhang2024dskd,tsiapali2026dskdkqm}.
Other methods bypass the token level entirely, matching approximate chunk likelihoods~\citep{minixhofer2025alm}, doing context-aware sequence matching with dynamic vocabulary mapping~\citep{chen2025cdm}, or using bytes as a tokenizer-agnostic interface~\citep{singh2026bld}, while related directions reuse the teacher loss as supervision~\citep{shin2025vocagnolm}, extend cross-tokenizer transfer to preference alignment~\citep{ngo2026ctpd}, or bridge tokenizer gaps in embedding-model representation space~\citep{liu2026mol_ct}.

%% file: chapters/3-method.tex
\section{Methodology}
\label{sec:method}

\subsection{Problem Setup}
\label{sec:setup}

Let $\mathcal T$ and $\mathcal S$ denote the teacher and student models, with vocabularies $\mathcal V_T$ and $\mathcal V_S$. In on-policy distillation, supervision is provided on prefixes sampled from the student policy rather than a fixed offline dataset. At step $t$, let $x_{<t}$ denote the student-generated prefix. Both models are conditioned on the same text-level context $x_{<t}$, but they predict the next token over their own vocabularies:
\begin{equation}
\label{eq:ctopd-distributions}
p_T(\cdot \mid x_{<t}) \in \Delta(\mathcal V_T),
\qquad
p_S(\cdot \mid x_{<t}) \in \Delta(\mathcal V_S).
\end{equation}
Here, $p_T(v \mid x_{<t})$ is the probability the teacher assigns to token $v \in \mathcal V_T$, and $p_S(u \mid x_{<t})$ is defined analogously for student token $u \in \mathcal V_S$. We write $\Delta(\mathcal V)$ for the probability simplex over $\mathcal V$.
When teacher and student share a tokenizer, the two distributions in Eq.~\eqref{eq:ctopd-distributions} are defined over a common prediction space, and standard OPD compares them directly, e.g., by minimizing a token-level KL:
\begin{equation}
\label{eq:standard-opd}
\mathcal L_{\mathrm{OPD}}(x_{<t})=\mathrm{KL}\!\left(
p_S(\cdot \mid x_{<t})\,\|\, 
p_T(\cdot \mid x_{<t})\right).
\end{equation}
This objective is well defined only because each student token corresponds to the same prediction unit as the corresponding teacher token. Thus, standard OPD implicitly assumes that teacher and student probabilities live on the same simplex and can be compared token by token.

Cross-tokenizer OPD breaks this assumption in two coupled ways. First, the vocabularies differ, $\mathcal V_T \neq \mathcal V_S$, so the teacher and student distributions in Eq.~\eqref{eq:ctopd-distributions} are supported on different probability spaces. Second, the same text may be segmented into different token sequences, so textually equivalent texts do not necessarily admit a one-to-one token alignment. We refer to these obstacles as \emph{vocabulary mismatch} and \emph{sequence mismatch}, respectively. The goal of cross-tokenizer OPD is to construct a common supervision interface in which teacher feedback remains comparable to the student's next-token prediction on student-generated prefixes.

\subsection{Generalized OPD over a Common Supervision Space}
\label{sec:gopd}

Cross-tokenizer OPD requires comparing two next-token distributions that are defined over different vocabularies and may correspond to different text segmentations. 
We handle this by replacing the shared-vocabulary assumption with a shared set of prediction units, as illustrated in Figure~\ref{fig:framework}(A). 
Specifically, we introduce a \emph{common supervision space} $\mathcal U$, together with scoring maps
\begin{equation}
\label{eq:projection-maps}
\Pi_T:\Delta(\mathcal V_T)\rightarrow \Delta(\mathcal U),
\qquad
\Pi_S:\Delta(\mathcal V_S)\rightarrow \Delta(\mathcal U),
\end{equation}
which transform teacher and student next-token distributions into comparable distributions over the same prediction units. 
Given a student-generated prefix $x_{<t}$, these maps induce comparable distributions
\begin{equation}
\label{eq:projected-distributions}
q_T(\cdot \mid x_{<t})
=
\Pi_T\!\left(p_T(\cdot \mid x_{<t})\right),
\qquad
q_S(\cdot \mid x_{<t})
=
\Pi_S\!\left(p_S(\cdot \mid x_{<t})\right).
\end{equation}
Distillation can then be performed directly in $\mathcal U$:
\begin{equation}
\label{eq:gopd-objective}
\mathcal L_{\mathrm{gOPD}}(x_{<t})
=
\mathcal D\!\left(
q_S(\cdot \mid x_{<t}),
q_T(\cdot \mid x_{<t})
\right),
\end{equation}
where $\mathcal D$ denotes a distillation divergence over the common supervision space.

This formulation recovers standard OPD as a special case. 
When teacher and student share a tokenizer, we can set $\mathcal U=\mathcal V_T=\mathcal V_S$ and $\Pi_T=\Pi_S=\mathrm{Id}$, reducing Eq.~\eqref{eq:gopd-objective} to the token-level objective in Eq.~\eqref{eq:standard-opd}. 
In the cross-tokenizer setting, however, the central design choice is how to construct $\mathcal U$ so that it preserves useful teacher supervision while remaining compatible with student predictions.
A direct baseline is to use only the shared vocabulary 
\(\mathcal U_{\mathrm{shared}}=\mathcal V_T\cap\mathcal V_S\), 
and restrict both distributions to this subspace. 
We refer to this baseline as \textbf{SimpleOPD}. 
SimpleOPD is a natural but conservative instantiation of Eq.~\eqref{eq:gopd-objective}: it avoids comparing tokens that are not shared by both vocabularies, 
but it discards supervision signal from texts that are semantically comparable yet represented by different token segments. 
Thus, SimpleOPD fails not because generalized OPD is inadequate, but because exact token overlap defines an overly narrow supervision space.
\begin{figure}[t]
 \centering
 \includegraphics[width=1.0\linewidth]{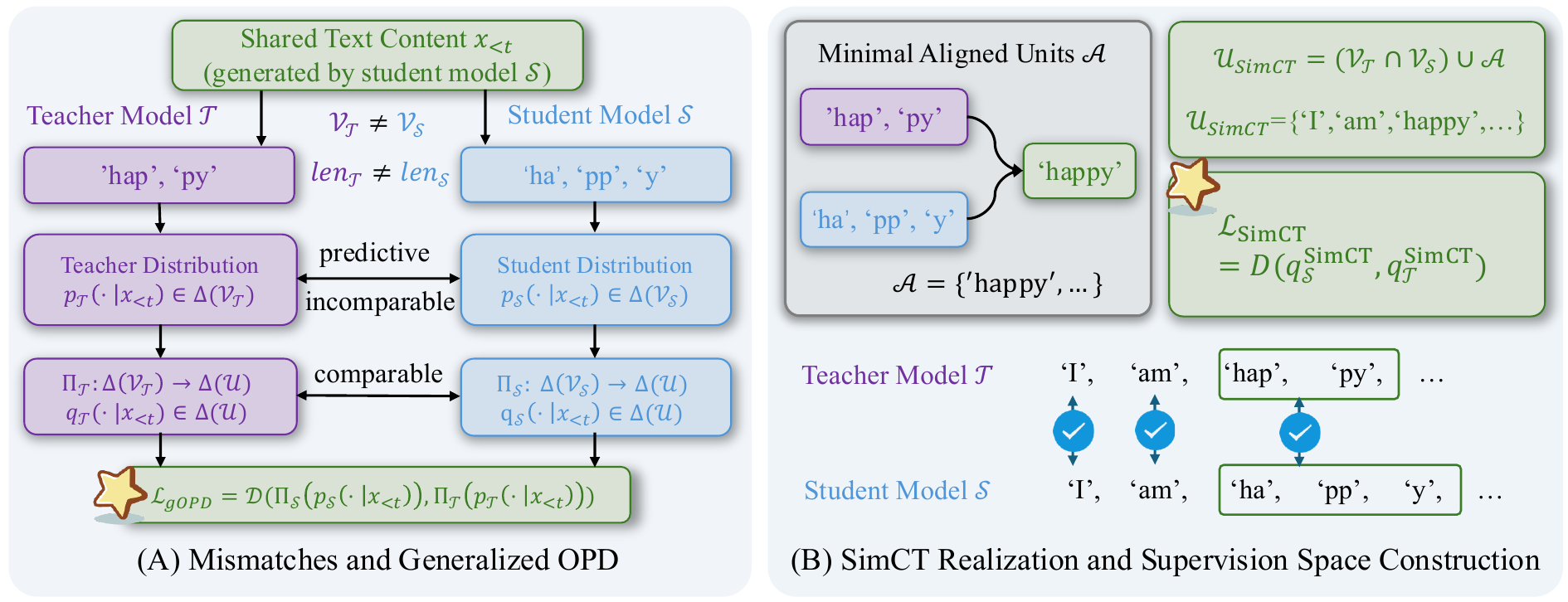}
 \caption{
 \textbf{Framework of SimCT.}
 \textbf{(A)} In cross-tokenizer OPD, teacher and student next-token distributions are conditioned on the same student-generated prefix, but are defined over different tokenizer spaces. We make them comparable through a common supervision space $\mathcal U$. 
 \textbf{(B)} SimCT builds $\mathcal U_{\mathrm{SimCT}}=(\mathcal V_T\cap\mathcal V_S)\cup\mathcal A$ by adding minimal aligned units $\mathcal A$, then applies the original OPD loss on the induced supervision distributions.
}
\label{fig:framework}
\end{figure}

\subsection{SimCT: Recovering Supervision via Minimal Aligned Units}
\label{sec:simct}
To expand the common supervision space that goes beyond exact token overlap, we propose \textbf{Simple Cross-Tokenizer OPD (SimCT)}. 
Its central principle is to recover teacher supervision whenever teacher and student predictions refer to the same underlying text, even if the two tokenizers realize that text with different token boundaries. 
Figure~\ref{fig:framework}(B) gives a simple example: the teacher tokenizes ``happy'' as ``hap'', ``py'', while the student tokenizes the same text as ``ha'', ``pp'', ``y''. 
Although not comparable as individual tokens, these refer to the same text, so SimCT treats it as a recoverable supervision unit.
To construct these units, SimCT considers texts where the two tokenizations disagree and forms the finest boundary-consistent text units realizable by both tokenizers.
We call these units \emph{minimal aligned units}, and denote the set of all such units by \(\mathcal A\).
SimCT then defines its supervision space as
\begin{equation}
\label{eq:simct-space}
\mathcal U_{\mathrm{SimCT}}
=
(\mathcal V_T\cap\mathcal V_S)\cup \mathcal A .
\end{equation}
The shared-vocabulary term keeps supervision that is already comparable on 1:1-aligned shared tokens, while \(\mathcal A\) recovers supervision from texts that tokenize differently but are comparable after mapping to common aligned units.

We now define the SimCT scoring maps \(\Pi_M^{\mathrm{SimCT}}\) for \(M\in\{\mathcal T,\mathcal S\}\).
SimCT first assigns each unit \(u\in\mathcal U_{\mathrm{SimCT}}\) a continuation score and then normalizes over the candidate supervision set.
For a 1:1-aligned shared token \(u\in\mathcal V_T\cap\mathcal V_S\), we use the original next-token log-probability,
\(s_M(u\mid x_{<t})=\log p_M(u\mid x_{<t})\).
For a minimal aligned unit \(u\in\mathcal A\), let \(\tau_M(u)=(v_1,\ldots,v_k)\) be its tokenizer-\(M\) realization, and define
\begin{equation}
\label{eq:simct-unit-score}
s_M(u\mid x_{<t})
=
\frac{1}{k}\log p_M(u\mid x_{<t})
=
\frac{1}{k}
\sum_{j=1}^{k}
\log p_M(v_j\mid x_{<t},v_{<j}).
\end{equation}
The factor \(1/k\) uses the average log-likelihood rather than the raw sequence log-likelihood, avoiding systematically penalizing units that require more tokens under a given tokenizer~\citep{wu2016google_nmt}. This normalization is a standard practical choice, not a requirement of the theoretical construction in \S\ref{sec:theory}.
The normalized supervision distribution over the candidate set is then
\begin{equation}
\label{eq:simct-projected-distribution}
q_M^{\mathrm{SimCT}}(u\mid x_{<t})
=
\frac{\exp(s_M(u\mid x_{<t}))}
{\sum_{u'\in\mathcal U_{\mathrm{SimCT}}}
\exp(s_M(u'\mid x_{<t}))},
\end{equation}
which we denote as \(q_M^{\mathrm{SimCT}}=\Pi_M^{\mathrm{SimCT}}(p_M)\).
Importantly, \(q_M^{\mathrm{SimCT}}\) should not be interpreted as a mass-preserving marginalization of the model's next-token distribution. For multi-token units, SimCT uses length-normalized continuation likelihoods as scores and normalizes them over the candidate supervision set. The resulting distribution is thus an operational scoring interface for cross-tokenizer supervision, rather than a generative probability projection over a complete partition of all possible continuations.

The SimCT loss is the generalized OPD loss instantiated on this recovered supervision space:
\begin{equation}
\label{eq:simct-objective}
\mathcal L_{\mathrm{SimCT}}(x_{<t})
=
\mathcal D\!\left(
q_S^{\mathrm{SimCT}}(\cdot\mid x_{<t}),
q_T^{\mathrm{SimCT}}(\cdot\mid x_{<t})
\right).
\end{equation}
SimCT performs OPD over the next continuation units rather than tokenizer-specific next tokens.
For a multi-token unit, Eq.~\eqref{eq:simct-unit-score} scores the candidate continuation under the current prefix \(x_{<t}\) through its autoregressive factors \(p_M(v_j\mid x_{<t},v_{<j})\), without using future ground-truth tokens or future rollout states.
Thus, SimCT preserves the on-policy training form while changing the supervision interface from exact token overlap to aligned continuation units.
The procedure is summarized in Algorithm~\ref{alg:simct}.

\subsection{Properties of the SimCT Supervision Space}
\label{sec:theory}

We present two theoretical results that clarify the design of the SimCT supervision space.
First, minimal aligned units provide the finest comparable text units jointly expressible by the teacher and student tokenizers.
Second, merging these units into coarser supervision units can remove teacher--student distinctions that are visible at the minimal-unit level.
Formal definitions, assumptions, and proofs are provided in Appendix~\ref{apd:theory}.

\begin{theorem}[Minimal aligned units]
\label{thm:canonical-space}
For any tokenizer-mismatched text region, SimCT constructs a partition into minimal aligned units.
Each unit is jointly tokenizable: it can be represented by one or more teacher tokens and also by one or more student tokens.
Moreover, no unit admits a strictly finer non-empty decomposition that preserves joint tokenizability.
\end{theorem}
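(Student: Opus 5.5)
The plan is to formalize the statement through \emph{token boundaries}. Fix a text region $w$ of length $n$ in characters or bytes. Let $\sigma_T(w)=(v_1,\dots,v_a)$ and $\sigma_S(w)=(u_1,\dots,u_b)$ be the teacher and student tokenizations. Each tokenization determines a boundary set of character offsets:
\[
B_T=\{0,\,|v_1|,\,|v_1v_2|,\dots,n\},\qquad B_S=\{0,\,|u_1|,\dots,n\}.
\]
A contiguous substring $w[i{:}j]$ is \emph{jointly tokenizable} when it is a concatenation of consecutive teacher tokens of $\sigma_T(w)$ and also of consecutive student tokens of $\sigma_S(w)$. This holds exactly when $i,j\in B_T\cap B_S$.

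The construction is then explicit. Write $B_T\cap B_S=\{0=c_0<c_1<\dots<c_m=n\}$; this set contains $0$ and $n$, so it is nonempty. SimCT's units are the pieces $w[c_{\ell-1}{:}c_\ell]$. Equivalently, the algorithm sweeps two pointers over the two token streams and emits a unit whenever the cumulative character offsets coincide. I would first check that this pointer procedure outputs exactly the consecutive elements of $B_T\cap B_S$, which follows by induction on the sweep.

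The three claims then follow in order.
\begin{itemize}
\item \emph{Partition.} The intervals $[c_{\ell-1},c_\ell)$ tile $[0,n)$, and each is nonempty because the $c_\ell$ are strictly increasing.
\item \emph{Joint tokenizability.} Both endpoints of every unit lie in $B_T$, so the unit equals the concatenation of the teacher tokens between them, of which there is at least one since the unit is nonempty. The same argument applies to $B_S$.
\item \emph{Minimality.} Suppose a unit $w[c_{\ell-1}{:}c_\ell]$ admitted a strictly finer decomposition into at least two nonempty jointly tokenizable pieces. Then some internal cut point $c$ with $c_{\ell-1}<c<c_\ell$ would be an endpoint of a jointly tokenizable piece, so $c\in B_T\cap B_S$. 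This contradicts the fact that $c_{\ell-1}$ and $c_\ell$ are consecutive in $B_T\cap B_S$.
\end{itemize}
I would also note that units consisting of a single shared token (1:1 aligned) are the special case in which both sides contribute exactly one token. That case recovers the $\mathcal V_T\cap\mathcal V_S$ component.

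The main obstacle is making ``jointly tokenizable'' precise. A piece could in principle be expressible by teacher tokens under some \emph{other} segmentation, not the one the tokenizer actually produces. If joint tokenizability were read in that broader sense, minimality could fail, because a finer piece might be realizable through non-canonical tokenizations. I would therefore state explicitly the assumption (presumably the appendix's) that realizations are those induced by each tokenizer's actual segmentation of the region, i.e.\ boundary-consistent with $\sigma_T(w)$ and $\sigma_S(w)$. Under this assumption every endpoint lies in $B_T\cap B_S$ and the argument goes through. If the broader notion were intended, I would instead restrict the claim to boundary-consistent decompositions, matching the phrase ``finest boundary-consistent'' used in the introduction.
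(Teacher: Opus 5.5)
Your proof is correct, but it organizes the construction differently from the paper.

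The paper cuts $y$ at the \emph{union} $\mathcal B_T(y)\cup\mathcal B_S(y)$ into atomic fragments. It links two fragments whenever they lie in the same teacher token or the same student token, and takes the connected components of this graph as the units. Joint tokenizability is then a closure property: a component meeting a token contains all of it. Minimality follows because a valid internal cut would be a boundary of both tokenizers, so no edge crosses it, contradicting connectedness.

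You instead define the units directly as the gaps between consecutive points of the \emph{intersection} $B_T\cap B_S$. This makes the partition property and contiguity automatic, and your minimality argument contradicts consecutiveness rather than connectedness. The key fact both proofs share is your characterization that a piece is jointly tokenizable exactly when both endpoints lie in $B_T\cap B_S$.

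To make your proof literally about the construction the paper calls SimCT, add a one-line bridge showing the two constructions coincide. A point of $B_T\cup B_S$ outside $B_T\cap B_S$ lies strictly inside a token of one tokenizer, which links the fragments on either side of it. Conversely, no token spans a point of $B_T\cap B_S$, so no edge crosses one. Hence the graph's components are exactly your intervals.

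Your formulation gives a more elementary proof and an explicit two-pointer algorithm. It also avoids a step the paper leaves tacit: that connected components are contiguous intervals, which is needed for ``consecutive complete teacher tokens.'' The paper's formulation, in turn, matches the intuition of closing tokenizer overlaps.

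Your caveat about non-canonical segmentations is resolved in your favor. The paper defines joint tokenizability relative to the actual partitions $\mathcal P_T(y)$ and $\mathcal P_S(y)$, exactly as you assume.
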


Theorem~\ref{thm:canonical-space} shows that SimCT constructs the finest boundary-consistent interface on which teacher and student predictions can be compared, making the supervision space more expressive than exact shared-token matching while avoiding unconstrained character-level comparison.

\begin{proposition}[Signal loss under coarser supervision spaces]
\label{prop:simct-consistency}
For a fixed student-generated prefix, let \(q_T^{\min}\) and \(q_S^{\min}\) be teacher and student distributions over SimCT minimal aligned units.
Let \(q_T^{\mathcal C}\) and \(q_S^{\mathcal C}\) be the distributions obtained by a non-trivial coarsening \(\mathcal C\), which sums probabilities inside each coarse unit.
Then
\begin{equation}
    \mathrm{KL}\!\left(q_S^{\min}\,\|\,q_T^{\min}\right)
    \ge
    \mathrm{KL}\!\left(q_S^{\mathcal C}\,\|\,q_T^{\mathcal C}\right).
\end{equation}
The gap is exactly the within-unit teacher--student discrepancy removed by the coarsening.
\end{proposition}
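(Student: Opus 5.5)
This is the chain rule for KL divergence, i.e. the data-processing inequality applied to a deterministic map. Formalize the coarsening $\mathcal C$ as a surjection $c:\mathcal U_{\min}\to\mathcal U_{\mathcal C}$ from minimal aligned units onto coarse units, each coarse unit being the union of its preimage. By hypothesis, for $M\in\{\mathcal T,\mathcal S\}$ and each coarse unit $C$,
\[
q_M^{\mathcal C}(C)=\sum_{u\in c^{-1}(C)} q_M^{\min}(u).
\]
Wherever $q_M^{\mathcal C}(C)>0$, define the within-unit conditional $q_M(u\mid C)=q_M^{\min}(u)/q_M^{\mathcal C}(C)$ for $u\in c^{-1}(C)$. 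This gives the factorization $q_M^{\min}(u)=q_M^{\mathcal C}(c(u))\,q_M(u\mid c(u))$.

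Substitute this factorization into $\mathrm{KL}(q_S^{\min}\,\|\,q_T^{\min})=\sum_u q_S^{\min}(u)\log\frac{q_S^{\min}(u)}{q_T^{\min}(u)}$. Split the logarithm into a coarse term and a conditional term, then regroup the sum by coarse unit. The result is the exact decomposition
\[
\mathrm{KL}\!\left(q_S^{\min}\,\|\,q_T^{\min}\right)=\mathrm{KL}\!\left(q_S^{\mathcal C}\,\|\,q_T^{\mathcal C}\right)+\sum_{C} q_S^{\mathcal C}(C)\,\mathrm{KL}\!\left(q_S(\cdot\mid C)\,\|\,q_T(\cdot\mid C)\right).
\]
The second term is a nonnegative mixture of KL divergences, by Gibbs' inequality, so the claimed inequality follows. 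The second term is also, by construction, the expected within-unit teacher--student discrepancy, which is the quantitative content of the sentence ``the gap is exactly the within-unit discrepancy''.

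The main obstacle is support and edge cases rather than the algebra. If $q_T^{\min}(u)=0<q_S^{\min}(u)$ for some $u$, then the left side is $+\infty$ and the inequality holds trivially; I would state the convention $0\log 0=0$ and treat absolute continuity $q_S^{\min}\ll q_T^{\min}$ as the main case. Coarse units with $q_S^{\mathcal C}(C)=0$ contribute nothing to either side, so their conditionals can be defined arbitrarily. If some $q_T^{\mathcal C}(C)=0$ while $q_S^{\mathcal C}(C)>0$, then $q_T^{\min}(u)=0<q_S^{\min}(u)$ for some $u\in c^{-1}(C)$, so both sides are infinite and the inequality still holds.

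Two further points need care. First, the hypothesis that $\mathcal C$ sums probabilities means we work with mass-additive distributions over minimal units; we do not work with the length-normalized scores of Eq.~\eqref{eq:simct-unit-score}. I would note explicitly that the proposition concerns this idealized interface. Second, ``non-trivial'' only guarantees that some coarse unit contains at least two minimal units. Strict inequality holds exactly when, for some such $C$ with $q_S^{\mathcal C}(C)>0$, the conditionals $q_S(\cdot\mid C)$ and $q_T(\cdot\mid C)$ differ. This is the precise sense in which coarsening erases useful distinctions.
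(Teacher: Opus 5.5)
Your proposal is correct and matches the paper's proof: factor each minimal-unit probability into coarse-unit mass times within-unit conditional, split the logarithm, and regroup to get $\mathrm{KL}(q_S^{\min}\|q_T^{\min})=\mathrm{KL}(q_S^{\mathcal C}\|q_T^{\mathcal C})+\sum_{c}q_S^{\mathcal C}(c)\,\mathrm{KL}(q_S(\cdot\mid c)\|q_T(\cdot\mid c))$, whose second term is nonnegative. Your additions go beyond the paper, which only omits zero-mass spans and never discusses strictness: you handle support and infinite-KL cases, and you note that ``non-trivial'' alone does not force strict inequality.
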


Proposition~\ref{prop:simct-consistency} motivates SimCT to preserve minimal aligned units rather than coarser units. In its idealized distributional setting, coarsening removes the within-unit teacher--student discrepancy that provides a fine-grained distillation signal; Section~\ref{sec:exp-space_ablation} measures this erased signal by the KL gap over student-generated prefixes. Together with Theorem~\ref{thm:canonical-space}, this supports using the finest tokenizer-compatible supervision units. The practical SimCT objective in Eq.~\eqref{eq:simct-projected-distribution} implements this principle with normalized continuation scores over a finite candidate set, not as an exact mass-preserving marginal.

%% file: chapters/4-experiment.tex
\section{Experiments}
\label{sec:exp}

\subsection{Experimental Setup}
\label{sec:exp-setup}

\paragraph{Models and pairs.}
We construct three teacher--student pairs that span all three model families and tokenizers: Qwen2.5-7B-Instruct~\citep{qwen25} $\to$ Phi-4-mini-Instruct~\citep{phi4mini}, Qwen $\to$ Gemma-2-2B-IT~\citep{gemma2}, and Phi $\to$ Gemma. 
All pairs follow the realistic large-to-small distillation direction. Two distinct models (Qwen and Phi) serve as teachers and two distinct models (Phi and Gemma) serve as students across pairs, so our results do not hinge on any single teacher or student choice.

\paragraph{Data and warm-start.}
We use a single teacher-generated corpus to warm-start every student before OPD. Prompts are drawn from GSM8K~\citep{cobbe2021gsm8k}, Orca-Math~\citep{mitra2024orcamath}, OpenMathInstruct-1~\citep{toshniwal2024openmathinstruct}, and MATH excluding MATH-500~\citep{hendrycks2021math} for math, and OpenCodeInstruct~\citep{ahmad2025opencodeinstruct}, KodCode~\citep{xu2025kodcode}, TACO~\citep{li2023taco}, and CodeContests~\citep{li2022alphacode} for code. For each prompt, the teacher samples 8 responses (temperature $0.6$, top-$p=0.95$); we keep one per prompt via task-specific correctness checks where available and quality filters otherwise, yielding a 10K-example SFT corpus. Each student is then SFT-trained on this corpus to produce the warm-start checkpoint from which all OPD methods start. Full composition, filtering, and license details are in Appendix~\ref{apd:data}.

\paragraph{On-policy distillation.}
Starting from the warm-start checkpoint, all methods follow an identical OPD loop and differ only in supervision construction: at each step the student samples a prefix under its current policy, the teacher is queried on this prefix to produce next-token supervision, and the student is updated under each method's loss. Training prompts, prefix-generation settings, optimizer, schedule, and compute budget are held identical across methods. Hyperparameters are in Appendix~\ref{apd:train}.

\paragraph{Baselines.}
We compare SimCT against four baselines that span the main families of cross-tokenizer distillation. SimpleOPD performs exact shared-vocabulary OPD, supervising only the tokens that appear in both vocabularies. DSKD~\citep{zhang2024dskd} unifies teacher and student output spaces through a learned cross-model projection, requiring additional trainable parameters. ALM~\citep{minixhofer2025alm} matches approximate likelihoods over tokenizer-induced chunks, providing supervision at the chunk level rather than the token level. GOLD~\citep{patino2025gold} extends universal logit distillation~\citep{boizard2025uld} to on-policy training with text-aligned probability merging across tokenizer boundaries.

\paragraph{Benchmarks.}
We evaluate mathematical reasoning on GSM8K~\citep{cobbe2021gsm8k} and MATH-500~\citep{lightman2023math500}, and code generation on MBPP~\citep{austin2021program_mbpp} and LiveCodeBench-v6~\citep{jain2024livecodebench}. We report pass@1 averaged over 5 runs under zero-shot prompting; full templates, decoding settings, and scoring details are in Appendix~\ref{apd:eval}.

\subsection{Overall Effectiveness}
\label{sec:exp-effective}

\input{tables/main-ctopd}

Table~\ref{tab:main_results_compact} reports the main results under the cross-tokenizer OPD setting.
All distillation methods start from the same SFT-warm-started student checkpoint and follow the same training protocol.
As reported in Appendix~\ref{app:compute}, SimCT only slightly increases the wall-clock cost over SimpleOPD and remains more efficient than baselines.
From these results, we draw the following observations:

\begin{itemize}[leftmargin=*]

\item \textbf{Obs 1: SimCT improves cross-tokenizer OPD consistently across teachers, students, and task domains.}
SimCT achieves the best average performance for all three teacher--student pairs and obtains the strongest result on every evaluated benchmark.
The gains hold when Qwen2.5-7B-Instruct distills to two different student families, and also when Phi-4-mini-instruct serves as the teacher for Gemma-2-2B-it.
They also appear on both mathematical reasoning and code-generation benchmarks.
This consistency suggests that SimCT is not tied to a particular teacher, student, or task type, but provides a robust supervision interface for cross-tokenizer OPD.

\item \textbf{Obs 2: Exact shared-token supervision is too conservative under tokenizer mismatch.}
SimpleOPD uses the most direct common supervision space: the exact shared vocabulary.
However, it improves only marginally over the SFT warm start across the three distillation settings.
This indicates that shared tokens preserve only a limited portion of useful teacher feedback when teacher and student tokenizers differ.
By adding minimal aligned units, SimCT recovers supervision from text continuations that are comparable but not represented as identical tokens, leading to consistently stronger performance than SimpleOPD.

\item \textbf{Obs 3: Prior cross-tokenizer interfaces are useful but less reliable across settings.}
ALM, GOLD, and DSKD often improve over SimpleOPD, confirming that cross-tokenizer supervision beyond exact token overlap is beneficial.
However, their relative rankings vary across teacher--student pairs and benchmarks: no existing baseline gives a uniformly strongest interface.
This suggests that making heterogeneous predictions comparable is necessary but not sufficient for cross-tokenizer OPD.
The supervision interface must also preserve local teacher--student distinctions needed for on-policy next-token feedback, which SimCT addresses through minimal aligned units.

\end{itemize}
Overall, these observations support the central design choice of SimCT: keep the OPD training paradigm unchanged, but improve the supervision space in which teacher feedback is expressed.
Minimal aligned units provide a more robust supervision interface than exact shared tokens or prior cross-tokenizer mappings, leading to consistent gains in realistic cross-tokenizer OPD settings. 
Appendix~\ref{app:case_study} presents a case study with math and code examples showing how different supervision interfaces affect local reasoning or implementation errors.

\subsection{Supervision Recovery under Tokenizer Mismatch}
\label{sec:exp-supervision_recovery}

We next ask whether SimCT improves cross-tokenizer OPD by recovering teacher supervision that is discarded under tokenizer mismatch.
Figure~\ref{fig:mismatch_analysis} decomposes the missing supervision, while Figure~\ref{fig:ablation} tests whether recovering this supervision leads to downstream gains.
From these analyses, we draw the following observations:

\begin{itemize}[leftmargin=*]

\item \textbf{Obs 4: Tokenizer mismatch removes useful teacher feedback through both sequence mismatch and vocabulary mismatch.}
Figure~\ref{fig:mismatch_analysis}(A) shows that a non-trivial fraction of teacher- and student-side tokens cannot be aligned one-to-one. 
This indicates that standard token-level matching fails to use teacher feedback on units that correspond to the same underlying text but are realized with different tokenizer boundaries.
Figure~\ref{fig:mismatch_analysis}(B) reveals a complementary failure mode: even at token-to-token-aligned positions, many high-probability teacher predictions fall outside the shared vocabulary.
Together, these results show that SimpleOPD is conservative in two distinct ways: it discards supervision from non-1:1 aligned texts, and it also removes non-shared vocabulary predictions even when the current positions are aligned.

\item \textbf{Obs 5: The discarded signals are beneficial rather than incidental tokenizer artifacts.}
Figure~\ref{fig:mismatch_analysis}(C) directly tests whether the missing signals identified above improve learning.
Starting from SimpleOPD, \textbf{+Out of Shared Vocab} adds non-shared vocabulary supervision at token-to-token-aligned positions, while \textbf{+Unaligned} adds supervision from tokenizer-mismatched units that cannot be aligned one-to-one.
Both variants improve over SimpleOPD for both student models, showing that the discarded supervision signal contains useful distillation information.
Full SimCT achieves the largest improvement because it recovers both sources through a unified aligned-unit supervision space.

\item \textbf{Obs 6: Recovering more mismatch unit supervision consistently improves downstream performance.}
Figure~\ref{fig:ablation}~(Left) further varies the fraction of tokenizer-mismatched unit supervision incorporated into training.
For both Gemma-2-2B and Phi-4-mini, the tendency to improve more recovered supervision is used.
This trend strengthens the mechanism suggested by Figure~\ref{fig:mismatch_analysis}: mismatch unit supervision is not merely recoverable, but provides a stable learning signal when mapped into a common supervision space.
SimCT exploits this signal by aligning tokenizer-specific continuations into minimal aligned units, rather than discarding them under exact token overlap.

\end{itemize}

These findings explain why SimpleOPD yields only limited gains in Table~\ref{tab:main_results_compact}. 
Shared-vocabulary distillation preserves only the subset of teacher feedback that is already comparable in the original token spaces.
SimCT expands the supervision space to reincorporate supervision signal excluded by shared-vocabulary matching, thereby recovering useful teacher supervision while keeping the original OPD objective unchanged.

\begin{figure}[t]
 \centering
 \includegraphics[width=1.0\linewidth]{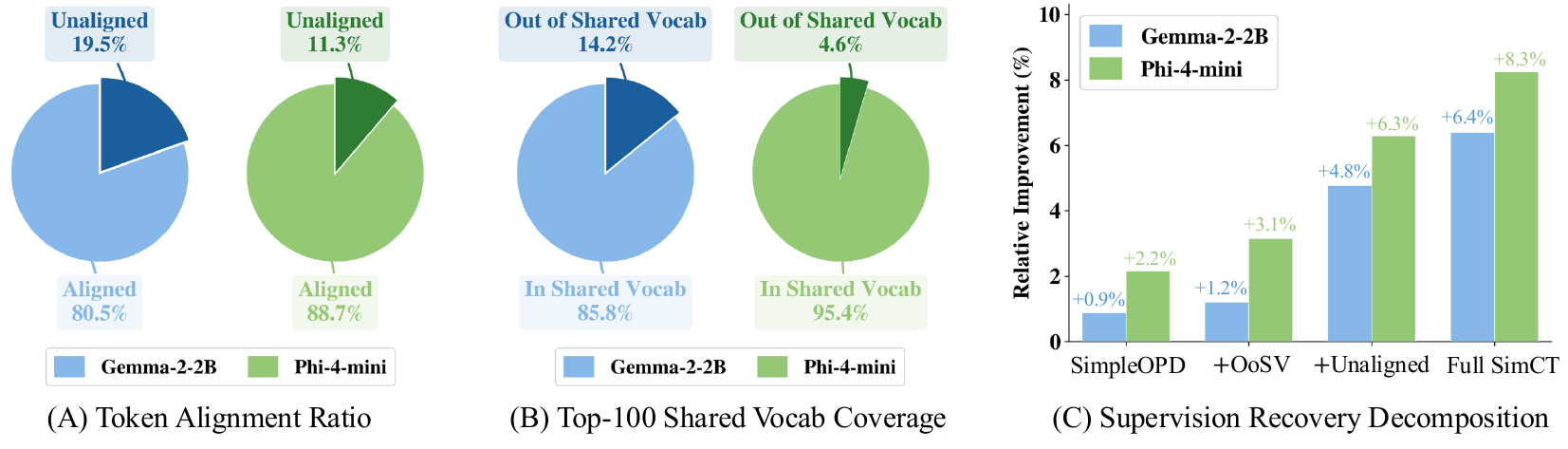}
 \caption{
 \textbf{Supervision recovery under tokenizer mismatch.}
 \textbf{(A)} Many teacher and student tokens are not aligned one-to-one, revealing supervision loss from sequence mismatch.
 \textbf{(B)} Even at aligned positions, high-probability teacher predictions may fall outside the shared vocabulary.
 \textbf{(C)} Recovering either missing source improves over Base, while Full SimCT performs best by recovering both in a common aligned-unit space.
 \textbf{+Out of Shared Vocab (+OoSV)} adds non-shared vocabulary supervision at aligned positions; \textbf{+Unaligned} adds supervision from non-1:1 aligned units.
 }
\label{fig:mismatch_analysis}
\end{figure}

\subsection{Ablation on Supervision-Space Construction}
\label{sec:exp-space_ablation}

The previous analysis shows that tokenizer-mismatched units contain useful teacher supervision. We now ask whether the \emph{granularity} of the recovered supervision also matters.
Starting from the SimCT supervision space, we construct coarsened variants by merging adjacent minimal aligned units into larger units.
Let \(q_T^{\min}\) and \(q_S^{\min}\) denote the teacher and student distributions over minimal aligned units.
For a coarsening \(\mathcal C\), the corresponding coarse distributions aggregate the probability mass of all minimal units inside each coarse unit:
\begin{equation}
\label{eq:coarsened-distribution}
q_T^{\mathcal C}(c\mid h)
=
\sum_{u\in c} q_T^{\min}(u\mid h),
\qquad
q_S^{\mathcal C}(c\mid h)
=
\sum_{u\in c} q_S^{\min}(u\mid h),
\end{equation}
where \(h\) is the student-generated prefix.
This aggregation preserves the total probability assigned to each coarse unit, but collapses the relative teacher--student preferences among its constituent minimal units.
We quantify the erased supervision signal as:
\begin{equation}
\label{eq:removed-kl-signal}
\Delta_{\mathcal C}
=
\mathbb E_h
\left[
\mathrm{KL}\!\left(q_S^{\min}(\cdot\mid h)\,\|\,q_T^{\min}(\cdot\mid h)\right)
-
\mathrm{KL}\!\left(q_S^{\mathcal C}(\cdot\mid h)\,\|\,q_T^{\mathcal C}(\cdot\mid h)\right)
\right].
\end{equation}
Thus, \(\Delta_{\mathcal C}\) measures the within-unit KL signal removed when minimal aligned units are merged.

\begin{itemize}[leftmargin=*]
\item \textbf{Obs 7: Coarsening removes useful local supervision.}
Figure~\ref{fig:ablation}~(Right) shows a negative relationship between removed KL signal and downstream improvement: coarsenings with larger \(\Delta_{\mathcal C}\) yield weaker gains for both Gemma-2-2B-it and Phi-4-mini.
This indicates that coarsening does more than simplify the supervision space; it erases local teacher--student preferences useful for OPD.
The result supports Proposition~\ref{prop:simct-consistency} and explains why SimCT keeps minimal aligned units instead of merging them into coarser units.
\end{itemize}

Overall, this ablation shows that SimCT benefits not only from recovering supervision under tokenizer mismatch, but also from preserving it at the finest granularity jointly expressible by both tokenizers.

\begin{figure}[t]
 \centering
 \includegraphics[width=1.0\linewidth]{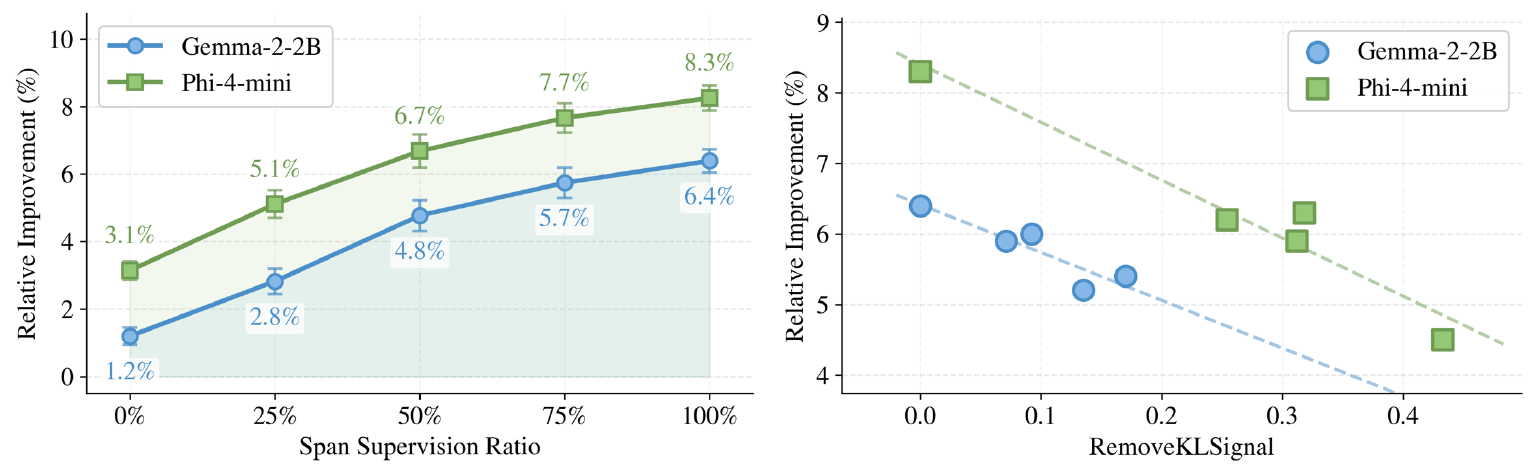}
 \caption{
 \textbf{Ablation on recovered unit supervision and aligned-unit coarsening.}
 \textbf{(Left)} Recovering more mismatch-unit supervision consistently improves both students.
 \textbf{(Right)} Coarsening minimal aligned units removes within-span KL signal and reduces downstream gains, supporting the need for SimCT's minimal aligned units.
}
\label{fig:ablation}
\end{figure}

%% file: tables/main-ctopd.tex
\begin{table}[t]
\centering
\small
\setlength{\tabcolsep}{6.5pt}
\newcommand{\std}[1]{{\,\scalebox{0.7}{\ensuremath{\pm}#1}}}
\caption{
Main results in the cross-tokenizer OPD setting.
All distillation methods start from the same SFT-warm-started student checkpoint for each teacher--student pair and follow the same training protocol.
We report mean pass@1 over 5 evaluation runs, with standard deviations shown for each benchmark column.
The Average column is computed over the four benchmark means.
Best and second-best student results within each teacher--student pair are shown in \textbf{bold} and \underline{underlined}.
}
\vspace{1mm}
\label{tab:main_results_compact}
\begin{tabular}{ccccccc}
\toprule
\multirow{2.5}{*}{\centering Model} & \multirow{2.5}{*}{\centering Method} & \multicolumn{2}{c}{Math} & \multicolumn{2}{c}{Code} & \multirow{2.5}{*}{\centering Average} \\
\cmidrule(lr){3-4} \cmidrule(lr){5-6}
& & GSM8K & MATH500 & MBPP & LCB-v6 & \\
\midrule
Gemma-2-2B-it & Base
& 61.85\std{0.25} & 17.96\std{0.38} & 36.92\std{0.30} & \phantom{0}6.65\std{0.23} & 30.83 \\
Phi-4-mini-instruct & Base
& 85.75\std{0.31} & 54.58\std{0.49} & 55.36\std{0.39} & \phantom{0}7.92\std{0.30} & 50.90 \\
Qwen2.5-7B-Instruct & Base
& 91.43\std{0.33} & 66.20\std{0.54} & 65.80\std{0.43} & 13.71\std{0.37} & 59.28 \\
\midrule

\multirow{6}{2.7cm}{\centering
\shortstack[c]{
Qwen2.5-7B-Instruct\\
{\footnotesize teacher}\\
$\downarrow$\\
Gemma-2-2B-it\\
{\footnotesize student}
}}
& \sftc{SFT}
& \sftc{61.86\std{0.22}} & \sftc{18.02\std{0.35}} & \sftc{37.14\std{0.28}} & \sftc{\phantom{0}6.77\std{0.20}} & \sftc{30.95} \\
& SimpleOPD
& 61.90\std{0.25} & 18.02\std{0.40} & 37.56\std{0.31} & \phantom{0}\underline{7.01}\std{0.22} & 31.12 \\
& ALM
& 62.39\std{0.27} & 19.41\std{0.44} & 37.18\std{0.30} & \phantom{0}6.71\std{0.21} & 31.42 \\
& GOLD
& 62.22\std{0.30} & 21.06\std{0.48} & \underline{37.84}\std{0.34} & \phantom{0}6.82\std{0.23} & 31.99 \\
& DSKD
& \underline{62.65}\std{0.31} & \underline{21.17}\std{0.49} & 37.34\std{0.32} & \phantom{0}6.86\std{0.24} & \underline{32.01} \\
& \simctc{SimCT}
& \simctc{\textbf{63.53}\std{0.42}} & \simctc{\textbf{22.18}\std{0.52}} & \simctc{\textbf{38.21}\std{0.35}} & \simctc{\phantom{0}\textbf{7.29}\std{0.25}} & \simctc{\textbf{32.80}} \\
\midrule

\multirow{6}{2.7cm}{\centering
\shortstack[c]{
Qwen2.5-7B-Instruct\\
{\footnotesize teacher}\\
$\downarrow$\\
Phi-4-mini-instruct\\
{\footnotesize student}
}}
& \sftc{SFT}
& \sftc{86.06\std{0.28}} & \sftc{55.16\std{0.45}} & \sftc{55.40\std{0.36}} & \sftc{\phantom{0}8.57\std{0.28}} & \sftc{51.30} \\
& SimpleOPD
& 87.04\std{0.30} & 56.82\std{0.50} & 55.02\std{0.38} & \phantom{0}9.23\std{0.31} & 52.03 \\
& ALM
& 87.81\std{0.33} & 59.84\std{0.56} & 55.12\std{0.40} & 10.86\std{0.37} & 53.41 \\
& GOLD
& 88.32\std{0.34} & \underline{60.14}\std{0.59} & 55.08\std{0.41} & \underline{12.06}\std{0.42} & \underline{53.90} \\
& DSKD
& \underline{88.63}\std{0.34} & 59.68\std{0.55} & \underline{55.78}\std{0.40} & 11.43\std{0.39} & 53.88 \\
& \simctc{SimCT}
& \simctc{\textbf{90.22}\std{0.40}} & \simctc{\textbf{61.42}\std{0.60}} & \simctc{\textbf{56.04}\std{0.41}} & \simctc{\textbf{12.57}\std{0.44}} & \simctc{\textbf{55.06}} \\
\midrule

\multirow{6}{2.7cm}{\centering
\shortstack[c]{
Phi-4-mini-instruct\\
{\footnotesize teacher}\\
$\downarrow$\\
Gemma-2-2B-it\\
{\footnotesize student}
}}
& \sftc{SFT}
& \sftc{62.32\std{0.22}} & \sftc{18.22\std{0.36}} & \sftc{37.48\std{0.29}} & \sftc{\phantom{0}6.43\std{0.20}} & \sftc{31.11} \\
& SimpleOPD
& \underline{62.62}\std{0.25} & 19.02\std{0.42} & 37.89\std{0.32} & \phantom{0}6.43\std{0.22} & 31.49 \\
& ALM
& 62.24\std{0.25} & 19.80\std{0.45} & 37.80\std{0.33} & \phantom{0}\underline{6.63}\std{0.23} & 31.62 \\
& GOLD
& 62.52\std{0.26} & \underline{20.36}\std{0.47} & \underline{38.82}\std{0.35} & \phantom{0}6.57\std{0.24} & \underline{32.07} \\
& DSKD
& 62.37\std{0.26} & 19.80\std{0.44} & 38.64\std{0.34} & \phantom{0}6.57\std{0.23} & 31.85 \\
& \simctc{SimCT}
& \simctc{\textbf{62.92}\std{0.31}} & \simctc{\textbf{21.10}\std{0.49}} & \simctc{\textbf{39.16}\std{0.36}} & \simctc{\phantom{0}\textbf{7.14}\std{0.27}} & \simctc{\textbf{32.58}} \\
\bottomrule
\end{tabular}
\end{table}

%% file: chapters/6-limitation.tex
\section{Limitations}
\label{sec:limitation}
Our evaluation centers on math and code reasoning with three teacher--student pairs spanning two teacher families and heterogeneous tokenizers; extending the study to additional domains, scales, and tokenizer families is left to future work. Methodologically, SimCT relies on mapping tokenizer mismatches to minimal aligned units, which is natural for standard BPE-style tokenizers and may require adaptation for tokenizers with substantially different segmentation behavior. 
Additionally, we adopt \(1/k\) (average log-likelihood) normalization as a standard length-correction heuristic, but do not exhaustively study alternatives such as no normalization, or character/byte-level normalization; a systematic calibration study of the induced candidate-space distribution is left to future work. 
Finally, we target the white-box OPD setting in which teacher next-token distributions are available; extending supervision-space construction to black-box teachers that expose only samples or scalar feedback is a promising direction.

%% file: chapters/5-conclusion.tex
\section{Conclusion}

We studied cross-tokenizer OPD, where teacher and student policies follow OPD but use different tokenizers. 
We argued that cross-tokenizer OPD should be viewed as a supervision-space construction problem: useful teacher feedback is lost when heterogeneous token-level predictions are forced to match only on exact shared tokens. 
We introduced SimpleOPD as a conservative shared-vocabulary baseline and proposed SimCT, which expands the common supervision space with minimal aligned units while preserving the standard OPD objective. 
Our theoretical analysis characterizes these units as the finest jointly expressible supervision interface and shows that coarsening them can erase useful teacher--student distinctions. 
Across model families and math/code benchmarks, SimCT consistently improves over SimpleOPD and representative cross-tokenizer distillation baselines. 
These results suggest that changing the supervision space, rather than the OPD training paradigm, is an effective path for cross-tokenizer OPD.

%% file: chapters/7-apd-data.tex
\section{Training Data Construction and Curation}
\label{apd:data}

This appendix describes the construction of the cold-start SFT corpus that produces the warm-start checkpoint shared by all OPD methods. We use public math and code datasets as prompt sources, apply filtering and deduplication, generate multiple teacher responses per prompt, and select one response according to correctness, executability, and formatting criteria. The SFT training procedure that consumes this corpus is described in Appendix~\ref{apd:train}.

\subsection{Composition}

The corpus contains 10{,}000 examples: 5{,}800 mathematical reasoning examples and 4{,}200 code-generation examples. The math/code split roughly matches the relative emphasis of the two task families in our evaluation suite, and within each family we draw from multiple sources to avoid overfitting to any single dataset's style.

For mathematical reasoning, we sample 1{,}800 prompts from the GSM8K~\citep{cobbe2021gsm8k} training split, 2{,}200 from Orca-Math 200K~\citep{mitra2024orcamath}, 1{,}000 from OpenMathInstruct-1~\citep{toshniwal2024openmathinstruct}, and 800 from MATH~\citep{hendrycks2021math}, the last with the MATH-500 evaluation split~\citep{lightman2023math500} excluded. The MATH subset is drawn from levels 1--4 (150/150/350/150, respectively); we exclude level 5 because its solutions frequently exceed our SFT context budget and exhibit high teacher-side variance, both of which inject noise into the warm-start signal without proportionate benefit. Excluded difficulty levels are still represented in evaluation through MATH-500.

For code generation, we sample 1{,}800 prompts from OpenCodeInstruct~\citep{ahmad2025opencodeinstruct}, 900 from KodCode~\citep{xu2025kodcode}, 900 from TACO~\citep{li2023taco}, and 600 from CodeContests~\citep{li2022alphacode}. The resulting subset covers function-level Python synthesis and competitive-programming I/O formats, with a preference toward examples that admit reliable execution-based verification.

\subsection{Prompt Filtering and Deduplication}

Before teacher generation, we filter prompts to remove noisy, duplicated, ill-formed, overly long, or difficult-to-verify instances, and we run overlap checks against held-out evaluation data using normalized text and near-duplicate matching to prevent test contamination.

For math sources, we retain text-only problems with clear statements, well-defined answer formats, and complete textual solutions. We discard examples with malformed questions, inconsistent answer formats, excessive derivation length, embedded code blocks, code-interpreter traces, or dependencies on diagrams, Asymptote figures, or other non-textual reasoning artifacts. We additionally apply problem-level near-duplicate filtering to suppress repeated template families.

For code sources, we keep only Python tasks with a well-defined solution target. When tests are available, we require compatibility with execution-based checking. We discard debugging, iterative-repair, SQL, shell, front-end, multi-file, interactive, project-dependent, and non-executable tasks. For contest-style sources (TACO, CodeContests), we further remove problems whose statements exceed our context budget or cannot be cleanly converted into standard Python generation examples.

\input{tables/apd-data-source}

\subsection{Teacher Response Generation and Selection}

For each retained prompt, the teacher samples 8 candidate responses with temperature $0.6$ and top-$p=0.95$, yielding a diverse but focused candidate set. We then select a single response per prompt as the SFT target.

\paragraph{Verification-based selection.}
When automatic verification is available, we keep only candidates that pass the corresponding checker. For math, we use \texttt{math\_verify}\footnote{\url{https://github.com/huggingface/math-verify}} to compare extracted final answers under symbolic equivalence (handling fractions, equivalent algebraic forms, and standard formatting variations). For code, we execute each candidate against the dataset's provided unit tests in a sandboxed Python environment with a per-test wall-clock timeout of 10 seconds and a memory cap of 128 MB. When multiple candidates pass, we keep the shortest valid response after standardized formatting normalization, which suppresses unnecessary verbosity and limits incidental teacher-specific style.

\paragraph{Quality-filter selection.}
When verification is unavailable or partial (e.g., open-ended math without a canonical final-answer field, or code prompts without supplied tests), we apply rule-based quality filters: responses must contain a non-empty final answer in the expected format (boxed final answer for math, a complete fenced Python code block with a defined entry point for code), must not exceed 4096 tokens, and must not contain unterminated reasoning, repeated boilerplate, or empty solution bodies. The full filter rule set is documented in our released curation script. Among candidates passing all filters, we again prefer the shortest.

\paragraph{Role of the public datasets.}
Public datasets are used as \emph{prompt sources only}; the supervised targets are generated and selected through the unified teacher-response pipeline above. This keeps warm-start supervision consistent with the teacher policy used during the subsequent OPD stage and prevents distributional drift between the two training phases. Source attribution and metadata are preserved per example throughout the pipeline.

\subsection{Licenses, Attribution, and Release}

Table~\ref{tab:asset_licenses} summarizes the public assets used in this work along with their roles and license terms. For each example in the released corpus we retain the source dataset, split, original identifier when available, source license, teacher model identifier, and generation configuration.

\input{tables/apd-asset-license}

Because prompt sources carry different licenses and terms, we do not assign a single license to the constructed corpus. Per-example source attribution and license metadata are retained, and downstream users are responsible for compliance with each original asset's terms; assets with non-commercial or mixed-source terms are flagged accordingly.

For anonymous review, we release the dataset construction scripts, filtering rules, source identifiers, license metadata, checksums, and a small data sample as supplementary material. The full corpus can be reproduced from the original sources using the documented filtering and teacher-generation pipeline. Upon acceptance, we will release the finalized documentation, scripts, metadata, and all data artifacts whose redistribution is permitted in a public repository.

%% file: tables/apd-data-source.tex
\begin{table}[t]
\centering
\small
\caption{
Composition of the cold-start SFT corpus. Public datasets are used as prompt sources; supervised targets are generated by Qwen2.5-7B-Instruct~\citep{qwen25} / Phi-4-mini-instruct~\citep{phi4mini} and selected through the response-selection pipeline described in Appendix~\ref{apd:data}.
}
\vspace{2mm}
\begin{tabular}{llr}
\toprule
Domain & Prompt source & \# Examples \\
\midrule
\multirow{5.5}{*}{Math}
& GSM8K train~\citep{cobbe2021gsm8k} & 1,800 \\
& Orca-Math 200K~\citep{mitra2024orcamath} & 2,200 \\
& OpenMathInstruct-1~\citep{toshniwal2024openmathinstruct} & 1,000 \\
& MATH, excluding MATH-500~\citep{hendrycks2021math,lightman2023math500} & 800 \\
\cmidrule(lr){2-3}
& \textit{Math subtotal} & \textit{5,800} \\
\midrule
\multirow{5.5}{*}{Code}
& OpenCodeInstruct~\citep{ahmad2025opencodeinstruct} & 1,800 \\
& KodCode~\citep{xu2025kodcode} & 900 \\
& TACO~\citep{li2023taco} & 900 \\
& CodeContests~\citep{li2022alphacode} & 600 \\
\cmidrule(lr){2-3}
& \textit{Code subtotal} & \textit{4,200} \\
\midrule
\multicolumn{2}{r}{\textbf{Total}} & \textbf{10,000} \\
\bottomrule
\end{tabular}
\label{tab:cold_start_data_composition}
\end{table}

%% file: tables/apd-asset-license.tex
\begin{table}[t]
\centering
\small
\caption{Source assets used in this work. Public datasets are used as prompt sources for constructing the cold-start SFT corpus. The final supervised targets are generated by the teacher model and selected through the response-selection pipeline described in Appendix~\ref{apd:data}. We preserve source attribution and license metadata for each example.}
\vspace{2mm}
\begin{tabular}{llll}
\toprule
Asset & Category & Role in This Work & License / Terms \\
\midrule
KDFlow~\citep{zhang2026kdflow} & Code & Codebase & MIT \\
Qwen2.5-7B-Instruct~\citep{qwen25} & Model & Teacher model & Apache-2.0 \\
Gemma-2-2B-IT~\citep{gemma2} & Model & Student model & Gemma Terms of Use \\
Phi-4-mini-instruct~\citep{phi4mini} & Model & Student model & MIT \\
\midrule
GSM8K~\citep{cobbe2021gsm8k} & Math data & Prompt source & MIT \\
Orca-Math 200K~\citep{mitra2024orcamath} & Math data & Prompt source & MIT \\
OpenMathInstruct-1~\citep{toshniwal2024openmathinstruct} & Math data & Prompt source & NVIDIA License \\
MATH\textsuperscript{$\dagger$}~\citep{hendrycks2021math} & Math data & Prompt source & MIT \\
\midrule
OpenCodeInstruct~\citep{ahmad2025opencodeinstruct} & Code data & Prompt source & CC BY 4.0 \\
KodCode~\citep{xu2025kodcode} & Code data & Prompt source & CC BY-NC 4.0 \\
TACO\textsuperscript{$\ddagger$}~\citep{li2023taco} & Code data & Prompt source & Apache-2.0 / mixed permissive terms \\
CodeContests\textsuperscript{$\S$}~\citep{li2022alphacode} & Code data & Prompt source & Apache-2.0 / CC BY 4.0 \\
\bottomrule
\end{tabular}
\vspace{0.2em}
\begin{minipage}{1\linewidth}
\vspace{2mm}
\footnotesize
\textsuperscript{$\dagger$} We use examples from the original MATH dataset~\citep{hendrycks2021math} after excluding MATH-500~\citep{lightman2023math500} evaluation instances. The source MATH dataset is released under the MIT license. 
\textsuperscript{$\ddagger$} TACO~\citep{li2023taco} is released under Apache-2.0 by its authors, while its official license note states that the dataset may include content under MIT and CC BY 4.0 terms.
\textsuperscript{$\S$} CodeContests~\citep{li2022alphacode} licenses code under Apache-2.0 and non-code materials under CC BY 4.0; additional third-party source terms may apply.
\end{minipage}
\vspace{2mm}
\label{tab:asset_licenses}
\end{table}

%% file: chapters/7-apd-train.tex
\section{Training Details}
\label{apd:train}

For each student model, training proceeds in two stages: a warm-start SFT stage to produce a non-trivial starting policy, followed by on-policy distillation. All OPD methods compared in this work share the same warm-start checkpoint, the same training prompts, and the same OPD hyperparameters, and differ only in how teacher and student predictions are compared under tokenizer mismatch. Hyperparameters for both stages are summarized in Table~\ref{tab:training_hparams}; this section describes each stage in detail.

\subsection{Warm-Start SFT}

We first SFT each student on the 10K teacher-generated corpus described in Appendix~\ref{apd:data} to obtain the warm-start checkpoint from which all OPD methods start. Sharing this initialization both gives the student a non-trivial policy before OPD~\citep{agarwal2024onpolicy_opsd} and ensures that differences across OPD methods are not confounded by SFT data or initial checkpoints.

We optimize with AdamW using a peak learning rate of $2 \times 10^{-6}$, a linear warmup ratio of $0.05$ followed by cosine decay, and train for $2$ epochs. All students share the same SFT hyperparameters (Table~\ref{tab:training_hparams}). Training is performed in bfloat16 on 8$\times$H20 96GB.

\subsection{On-Policy Distillation}

Starting from the warm-start checkpoint, we train each OPD method under an identical loop: at every step the student samples a prefix under its current policy, the teacher is queried on this prefix to produce next-token supervision, and the student is updated under the method-specific loss. Training prompts, prefix-generation settings (temperature $0.6$, top-$p=0.95$), optimizer, learning-rate schedule, batch construction, and total compute budget are held identical across methods (Table~\ref{tab:training_hparams}).

For SimCT and SimpleOPD, the OPD loss is reverse KL~\citep{agarwal2024onpolicy_opsd,ko2024distillm_opd} applied in their respective supervision spaces---SimCT's space combines shared vocabulary tokens with minimal aligned units, while SimpleOPD restricts supervision to the exact teacher--student vocabulary overlap.

\subsection{SimCT Algorithm}

Algorithm~\ref{alg:simct} summarizes SimCT. Given a prompt $x$, the student first generates a full on-policy response $y$. At each prediction position $t$, teacher and student next-token distributions $p_T^t$ and $p_S^t$ are evaluated under the same prefix $(x, y_{<t})$. SimCT constructs a common supervision space $\mathcal U_t$ from shared vocabulary tokens augmented with minimal aligned units induced by local tokenizer-boundary mismatches, and scoring operators $\Pi_T, \Pi_S$ map both distributions into comparable distributions $q_T^t, q_S^t$ over $\mathcal U_t$ on which reverse KL is applied.

\input{tables/apd-code}

\input{tables/apd-hyperparameter}

%% file: tables/apd-code.tex
\begin{algorithm}[t]
\caption{Simple Cross-Tokenizer OPD (SimCT)}
\label{alg:simct}
\begin{algorithmic}[1]
\Require Teacher \(\mathcal T\), student \(\mathcal S_\theta\), tokenizers \(\tau_T,\tau_S\), vocabularies \(\mathcal V_T,\mathcal V_S\), prompts \(\mathcal X\)
\Ensure Updated student \(\mathcal S_\theta\)

\For{each OPD training step}
    \State Sample prompts \(x\sim\mathcal X\)
    \State Generate student rollouts \(y\sim\mathcal S_\theta(\cdot\mid x)\)

    \Statex
    \State \textbf{Construct the SimCT supervision space.}
    \State Tokenize the same rollout with both tokenizers:
    \[
    \mathbf y^T\gets \tau_T(y),\qquad
    \mathbf y^S\gets \tau_S(y).
    \]
    \State Extract minimal aligned units:
    \[
    \mathcal A(y)
    \gets
    \mathrm{MinimalAlignedUnits}(\mathbf y^T,\mathbf y^S,y).
    \]
    \State Define the SimCT supervision space as in Eq.~\eqref{eq:simct-space}:
    \[
    \mathcal U_{\mathrm{SimCT}}(y)
    \gets
    (\mathcal V_T\cap\mathcal V_S)\cup \mathcal A(y).
    \]

    \Statex
    \State \textbf{Distill on student-generated prefixes.}
    \State \(\mathcal L_{\mathrm{SimCT}}\gets 0\)

    \For{each student-generated prefix \(x_{<t}\) in \((x,y)\)}
        \State Compute teacher and student next-token distributions:
        \[
        p_T^t \gets p_T(\cdot\mid x_{<t}),
        \qquad
        p_S^t \gets p_S(\cdot\mid x_{<t}).
        \]

        \State Compute continuation scores \(s_M(\cdot\mid x_{<t})\) for \(M\in\{T,S\}\) via Eq.~\eqref{eq:simct-unit-score}:
        \[
        s_M(u\mid x_{<t})
        =
        \log p_M(u\mid x_{<t})
        \;\text{for } u\in\mathcal V_{\cap}; \]
        \[
        s_M(u\mid x_{<t})
        =
        \tfrac{1}{k}\textstyle\sum_{j=1}^{k}\log p_M(v_j\mid x_{<t},v_{<j})
        \;\text{for } u\in\mathcal A(y).
        \]

        \State Map both models to the SimCT supervision space using Eq.~\eqref{eq:simct-projected-distribution}:
        \[
        q_T^t
        \gets
        \Pi_T^{\mathrm{SimCT}}(p_T^t),
        \qquad
        q_S^t
        \gets
        \Pi_S^{\mathrm{SimCT}}(p_S^t).
        \]

        \State Accumulate the SimCT loss as in Eq.~\eqref{eq:simct-objective}:
        \[
        \mathcal L_{\mathrm{SimCT}}
        \gets
        \mathcal L_{\mathrm{SimCT}}
        +
        \mathcal D(q_S^t,q_T^t).
        \]
    \EndFor

    \State Update \(\theta\) using \(\nabla_\theta \mathcal L_{\mathrm{SimCT}}\)
\EndFor

\State \Return \(\mathcal S_\theta\)
\end{algorithmic}
\end{algorithm}

%% file: tables/apd-hyperparameter.tex
\begin{table}[h]
\centering
\caption{Training hyperparameters for warm-start SFT and on-policy distillation. All students share the same SFT hyperparameters; all OPD methods share the same OPD hyperparameters and differ only in supervision construction.}
\vspace{2mm}
\label{tab:training_hparams}
\begin{tabular}{lcc}
\toprule
 & Warm-Start SFT & On-Policy Distillation \\
\midrule
Optimizer                  & AdamW                    & AdamW \\
Peak learning rate         & $2 \times 10^{-6}$       & $1 \times 10^{-6}$ \\
Warmup ratio               & $0.05$                   & $0.05$ \\
LR schedule                & Cosine decay             & Cosine decay \\
Weight decay               & $0.0$  & $0.0$ \\
Training length            & $2$ epochs               & $2$ epochs \\
Per-device batch size      & $2$              & $1$ \\
Gradient accumulation      & $4$              & $8$ \\
Effective batch size       & $64$             & $64$ \\
Max sequence length        & $4096$           & $4096$ \\
\midrule
Rollout temperature        & ---                      & $0.6$ \\
Rollout top-$p$            & ---                      & $0.95$ \\
Rollout max length         & ---                      & $4096$ \\
Loss (SimCT, SimpleOPD)    & ---                      & Reverse KL \\
\midrule
Precision                  & bfloat16               & bfloat16 \\
Hardware                   & $8\times$ H20   & $8\times$ H20 \\
\bottomrule
\end{tabular}
\end{table}

%% file: chapters/7-apd-eval.tex
\section{Evaluation Benchmarks and Metrics}
\label{apd:eval}

\subsection{Benchmarks}

We evaluate on four benchmarks covering complementary task regimes.
\textbf{GSM8K}~\citep{cobbe2021gsm8k} consists of grade-school word problems requiring multi-step arithmetic from natural-language descriptions; final answers are short numerical values, but solving the problems demands a coherent reasoning trajectory.
\textbf{MATH-500}~\citep{hendrycks2021math,lightman2023math500} draws from competition-style problems spanning algebra, geometry, number theory, and combinatorics; answers may be symbolic forms, fractions, or equations, making the task sensitive to both reasoning quality and output normalization.
\textbf{MBPP}~\citep{austin2021program_mbpp} measures function-level Python synthesis from natural-language specifications, evaluated by the dataset's provided unit tests.
\textbf{LiveCodeBench-v6}~\citep{jain2024livecodebench} evaluates code generation on contemporary problems with reduced contamination, requiring more robust algorithmic reasoning under hidden tests.

These benchmarks expose tokenizer mismatch in different ways---math evaluation involves symbolic expressions and rationale tokens, while code involves identifiers, indentation, and syntax fragments---so they jointly test whether SimCT recovers useful supervision across heterogeneous output structures rather than overfitting to a single format.

\subsection{Decoding}

All benchmarks are evaluated zero-shot: each model receives only the task prompt, with no in-context demonstrations. This isolates capability learned during SFT and OPD from prompt-level effects, and avoids tokenizer-dependent confounds introduced by few-shot examples.

For each benchmark, we use a fixed prompt template across all methods and student models. Math prompts ask the model to solve the problem and place the final answer in \verb|\boxed{}|; code prompts present the specification and ask for a Python solution in the benchmark's expected format. Full prompt templates are released in our supplementary code.

We decode with nucleus sampling at temperature $0.6$ and top-$p=0.95$, sampling one completion per instance and repeating evaluation for 5 independent runs with different random seeds. The maximum generation length is fixed per benchmark and held identical across methods (4096 tokens for GSM8K, MATH-500 and LiveCodeBench-v6, and 2048 tokens for MBPP).

\subsection{Scoring}

We report pass@1~\citep{chen2021evaluating_humaneval} averaged over the 5 runs; per-run mean$\pm$std is in Table~\ref{tab:main_results_compact}.

For math, we extract the final answer (preferring content inside \verb|\boxed{}| when present) and compare against the reference using \texttt{math\_verify}\footnote{\url{https://github.com/huggingface/math-verify}} under symbolic equivalence; this is the same checker used during SFT data curation (Appendix~\ref{apd:data}), ensuring the verification standard is consistent across stages. For code, generated solutions are executed in a sandboxed Python environment with a per-test timeout of 10 seconds and a memory cap of 128 MB; we use the official MBPP test harness and the LiveCodeBench-v6 official evaluator. Postprocessing rules, the answer-extraction regex, and the sandbox configuration are identical across all compared methods.

%% file: chapters/7-apd-theorem.tex
\section{Theory and Proofs}
\label{apd:theory}

This appendix provides the formal definitions and proofs for
Theorem~\ref{thm:canonical-space} and
Proposition~\ref{prop:simct-consistency}. The section has two goals. First, we
formalize why SimCT aligned units are the minimal text units that remain
jointly tokenizable by the teacher and student tokenizers. Second, we prove the
property used in the supervision-space ablation: merging minimal aligned units
removes exactly the within-span KL signal that distinguishes teacher and
student preferences inside each merged span.

\subsection{Tokenizer-Induced Spans and Minimal Aligned Units}
\label{apd:tokenizer-spans}

Let \(y\) be a finite text span represented over a fixed atomic string
representation, such as characters or bytes. We identify \(y\) with the interval
\([0,|y|]\). The teacher tokenizer induces a partition
\[
\mathcal P_T(y)
=
\{[b^T_{i-1}, b^T_i): i=1,\ldots,m\},
\qquad
0=b^T_0<\cdots<b^T_m=|y|,
\]
and the student tokenizer induces a partition
\[
\mathcal P_S(y)
=
\{[b^S_{j-1}, b^S_j): j=1,\ldots,n\},
\qquad
0=b^S_0<\cdots<b^S_n=|y|.
\]
Each interval in \(\mathcal P_T(y)\) is a complete teacher-token span, and
each interval in \(\mathcal P_S(y)\) is a complete student-token span.

We first take the union of teacher and student token boundaries:
\[
\mathcal B_{\cup}(y)
=
\mathcal B_T(y)\cup \mathcal B_S(y),
\]
where \(\mathcal B_T(y)=\{b^T_i\}_{i=0}^m\) and
\(\mathcal B_S(y)=\{b^S_j\}_{j=0}^n\). Let
\[
0=c_0<c_1<\cdots<c_\ell=|y|
\]
be the sorted elements of \(\mathcal B_{\cup}(y)\). This gives atomic
fragments
\[
\mathcal A(y)
=
\{[c_{k-1},c_k): k=1,\ldots,\ell\}.
\]
These fragments are only used to define the construction. A fragment may lie
inside a teacher token or a student token, and therefore is not necessarily a
valid supervision unit by itself.

To obtain jointly-tokenizable units, we build an undirected graph \(G_y\) whose
vertices are the atomic fragments in \(\mathcal A(y)\). Two vertices are
connected if their fragments lie inside the same teacher-token span or inside
the same student-token span. The connected components of \(G_y\) induce a
partition of \(y\), denoted
\[
\mathcal P_{\mathrm{SimCT}}(y)
=
\{u_1,\ldots,u_r\}.
\]
We call each \(u_k\) a \emph{minimal aligned unit}. Intuitively, the connected
component closes all tokenizer overlaps: if splitting a span would cut through
a complete teacher token or a complete student token, the corresponding
fragments must remain in the same aligned unit.

\subsection{Proof of Theorem~\ref{thm:canonical-space}}
\label{apd:proof-canonical-space}

\begin{proof}
We prove joint tokenizability and minimality.

\paragraph{Joint tokenizability.}
Consider any unit \(u\in\mathcal P_{\mathrm{SimCT}}(y)\). By construction,
\(u\) is a connected component of \(G_y\). If a teacher-token span intersects
\(u\), then all atomic fragments inside that teacher token are connected in
\(G_y\), so the entire teacher-token span must lie in the same connected
component. Therefore, no boundary of \(u\) cuts through a teacher token, and
\(u\) is a union of consecutive complete teacher tokens.

The same argument applies to the student tokenizer. If a student-token span
intersects \(u\), all atomic fragments inside that student token are connected,
so the complete student-token span lies inside \(u\). Hence \(u\) is also a
union of consecutive complete student tokens. Thus \(u\) can be represented as
one or more teacher tokens and also as one or more student tokens.

\paragraph{Minimality.}
Assume for contradiction that a unit \(u\) admits a strictly finer non-empty
decomposition
\[
u = u' \circ u'',
\]
where both \(u'\) and \(u''\) are jointly tokenizable. Let \(z\) be the boundary
between \(u'\) and \(u''\). Since both sides are unions of complete teacher
tokens, \(z\) must be a teacher-token boundary. Since both sides are also
unions of complete student tokens, \(z\) must be a student-token boundary.
Therefore, no teacher token or student token crosses \(z\).

It follows that no edge in \(G_y\) can connect an atomic fragment in \(u'\) to
an atomic fragment in \(u''\), because such an edge would require the two
fragments to lie inside the same teacher-token span or the same student-token
span, contradicting that no token crosses \(z\). Thus \(u'\) and \(u''\) would
belong to different connected components of \(G_y\), contradicting that \(u\)
is one connected component. Therefore, no strictly finer non-empty
jointly-tokenizable decomposition exists.
\end{proof}

\paragraph{Implication.}
The theorem shows that SimCT does not rely on an arbitrary span length. It
selects the finest units that are valid under both tokenizers. Finer atomic
fragments may cut through complete tokens, while coarser mismatch spans merge
multiple valid aligned units.

\subsection{Proof of Proposition~\ref{prop:simct-consistency}: Coarsening and Removed KL Signal}
\label{apd:proof-coarsening}

We prove Proposition~\ref{prop:simct-consistency}, which is used in the
supervision-space ablation. For completeness, we restate the proposition before
giving the proof.
Let
\(\mathcal U_{\min}\) denote the minimal aligned-unit space for a
student-generated prefix \(h\). Let
\[
q_T^{\min}(\cdot\mid h),
\qquad
q_S^{\min}(\cdot\mid h)
\]
be the teacher and student distributions over \(\mathcal U_{\min}\).

A coarsening \(\mathcal C\) partitions \(\mathcal U_{\min}\) into disjoint
coarse spans. For each coarse span \(c\in\mathcal C\), the coarsened
distributions are
\[
q_T^{\mathcal C}(c\mid h)
=
\sum_{u\in c}q_T^{\min}(u\mid h),
\qquad
q_S^{\mathcal C}(c\mid h)
=
\sum_{u\in c}q_S^{\min}(u\mid h).
\]
This is exactly the aggregation used in Eq.~\ref{eq:coarsened-distribution}.

\paragraph{Proposition~\ref{prop:simct-consistency} (restated).}
For any coarsening \(\mathcal C\) of the minimal aligned-unit space,
\[
\mathrm{KL}\!\left(q_S^{\min}(\cdot\mid h)\,\|\,q_T^{\min}(\cdot\mid h)\right)
\ge
\mathrm{KL}\!\left(q_S^{\mathcal C}(\cdot\mid h)\,\|\,q_T^{\mathcal C}(\cdot\mid h)\right).
\]
Moreover, the difference equals the expected within-coarse-span KL:
\[
\begin{aligned}
&
\mathrm{KL}\!\left(q_S^{\min}(\cdot\mid h)\,\|\,q_T^{\min}(\cdot\mid h)\right)
-
\mathrm{KL}\!\left(q_S^{\mathcal C}(\cdot\mid h)\,\|\,q_T^{\mathcal C}(\cdot\mid h)\right)
\\
&\quad =
\sum_{c\in\mathcal C}
q_S^{\mathcal C}(c\mid h)
\,
\mathrm{KL}\!\left(
q_S^{\min}(\cdot\mid h,c)
\,\|\,
q_T^{\min}(\cdot\mid h,c)
\right),
\end{aligned}
\]
where
\[
q_S^{\min}(u\mid h,c)
=
\frac{q_S^{\min}(u\mid h)}{q_S^{\mathcal C}(c\mid h)},
\qquad
q_T^{\min}(u\mid h,c)
=
\frac{q_T^{\min}(u\mid h)}{q_T^{\mathcal C}(c\mid h)}
\]
for \(u\in c\), with zero-mass coarse spans omitted. Thus, coarsening removes exactly the KL signal corresponding to teacher--student discrepancies among minimal aligned units inside each merged span.

\begin{proof}
For readability, omit the conditioning on \(h\). Let
\(q_S=q_S^{\min}\) and \(q_T=q_T^{\min}\). For each coarse span
\(c\in\mathcal C\), write
\[
Q_S(c)=\sum_{u\in c}q_S(u),
\qquad
Q_T(c)=\sum_{u\in c}q_T(u).
\]
Then \(Q_S=q_S^{\mathcal C}\) and \(Q_T=q_T^{\mathcal C}\).

We decompose the minimal-unit KL by grouping terms according to their coarse
span:
\[
\mathrm{KL}(q_S\|q_T)
=
\sum_{c\in\mathcal C}\sum_{u\in c}
q_S(u)\log\frac{q_S(u)}{q_T(u)}.
\]
For any \(u\in c\), write
\[
q_S(u)=Q_S(c)\,q_S(u\mid c),
\qquad
q_T(u)=Q_T(c)\,q_T(u\mid c),
\]
where
\[
q_S(u\mid c)=\frac{q_S(u)}{Q_S(c)},
\qquad
q_T(u\mid c)=\frac{q_T(u)}{Q_T(c)}.
\]
Substituting this factorization gives
\[
\begin{aligned}
\mathrm{KL}(q_S\|q_T)
&=
\sum_{c\in\mathcal C}\sum_{u\in c}
Q_S(c)q_S(u\mid c)
\log
\frac{Q_S(c)q_S(u\mid c)}
{Q_T(c)q_T(u\mid c)}
\\
&=
\sum_{c\in\mathcal C}
Q_S(c)\log\frac{Q_S(c)}{Q_T(c)}
+
\sum_{c\in\mathcal C}
Q_S(c)
\sum_{u\in c}q_S(u\mid c)
\log\frac{q_S(u\mid c)}{q_T(u\mid c)}
\\
&=
\mathrm{KL}(Q_S\|Q_T)
+
\sum_{c\in\mathcal C}
Q_S(c)\,
\mathrm{KL}\!\left(q_S(\cdot\mid c)\,\|\,q_T(\cdot\mid c)\right).
\end{aligned}
\]
Since the second term is a nonnegative weighted sum of KL divergences,
\[
\mathrm{KL}(q_S\|q_T)
\ge
\mathrm{KL}(Q_S\|Q_T).
\]
Rearranging the equality above gives the stated expression for the removed KL
signal.
\end{proof}

\paragraph{Connection to the ablation.}
Equation~\ref{eq:removed-kl-signal} averages the difference above over
student-generated prefixes:
\[
\Delta_{\mathcal C}
=
\mathbb E_h
\left[
\mathrm{KL}(q_S^{\min}(\cdot\mid h)\|q_T^{\min}(\cdot\mid h))
-
\mathrm{KL}(q_S^{\mathcal C}(\cdot\mid h)\|q_T^{\mathcal C}(\cdot\mid h))
\right].
\]
By the proposition, \(\Delta_{\mathcal C}\) is nonnegative and measures the
expected within-span KL signal erased by the coarsening \(\mathcal C\). The
right panel of Figure~\ref{fig:ablation} therefore directly tests the
prediction suggested by the theory: coarsenings that remove more within-span
teacher--student discrepancy provide weaker supervision and lead to smaller
downstream gains.

\subsection{Discussion}
\label{apd:theory-discussion}

The theorem and proposition play complementary roles. The theorem identifies
the finest jointly-tokenizable units available under the two tokenizer
partitions. The proposition shows what is lost when those units are merged:
coarsening preserves the total probability of each merged span, but discards
the conditional teacher--student differences among the minimal units inside it.

These results rely only on tokenizer boundaries and probability aggregation.
They do not assume a learned semantic alignment, an embedding similarity metric,
or an optimal-transport plan. This matches the design of SimCT: the method
changes the supervision space to the finest comparable text units while keeping
the OPD objective itself unchanged.

%% file: chapters/7-apd-computing_time.tex
\section{Computational Resources}
\label{app:compute}
\input{tables/apd-computing_time}
All experiments were conducted on a single server equipped with dual AMD EPYC 9K84 CPUs, 384 logical CPU cores, 2.2TB system memory, and $8\times$ NVIDIA H20 GPUs with approximately 96GB memory per GPU.
Table~\ref{tab:training_time} reports the measured wall-clock time of a single cross-tokenizer OPD run under this hardware setting.
Across teacher--student pairs and methods, one training run typically takes about 2.5--3 hours.
SimCT adds only a small overhead over SimpleOPD, while being faster than ALM, GOLD, and DSKD across the reported settings.

Table~\ref{tab:compute} summarizes the compute required to reproduce the reported main experiments.
To reduce variance, every stage of the pipeline (warm-start SFT, cross-tokenizer OPD training, and evaluation) is repeated $5$ times with different random seeds, and the reported results in Table~\ref{tab:main_results_compact} are averaged across these runs.
For the aggregate estimate, we use a conservative rounded runtime of approximately $3$ hours per cross-tokenizer OPD run, consistent with the measured runtimes in Table~\ref{tab:training_time}.
The total cost is approximately $14.8$ days on a single $8\times$ H20 server.
The total consists of three parts.
First, the warm-start stage costs about $1.0$ day on a single $8\times$ H20 server, covering two teacher response-generation runs and three student SFT runs (Phi-4 on Qwen-generated data, Gemma-2 on Qwen-generated data, and Gemma-2 on Phi-generated data), each repeated over $5$ random seeds.
Second, cross-tokenizer OPD training costs about $9.4$ days on a single $8\times$ H20 server, covering three teacher--student pairs (Qwen2.5$\rightarrow$Phi4, Qwen2.5$\rightarrow$Gemma-2, and Phi4$\rightarrow$Gemma-2), each trained with five methods (SimpleOPD, SimCT, ALM, GOLD, and DSKD), and each method-pair configuration repeated over $5$ random seeds, yielding $3 \times 5 \times 5 = 75$ training runs.
Third, evaluation costs about $4.4$ days on a single $8\times$ H20 server, covering three base models, three warm-start checkpoints, and fifteen trained checkpoints, each evaluated over $5$ independent runs.

The reported total only accounts for the runs needed to reproduce the main experimental results.
It excludes preliminary studies, hyperparameter exploration, debugging runs, ablation studies in Section~\ref{sec:exp-supervision_recovery} and Section~\ref{sec:exp-space_ablation}, and failed or repeated jobs during development.
The full project compute is therefore larger than the reproducibility cost reported here.

Finally, the end-to-end wall-clock cost of cross-tokenizer OPD is dominated by student rollouts and teacher forward passes.
SimCT operates at the distillation-interface level. It introduces no trainable projection module, requires no global vocabulary-level alignment, and keeps the standard OPD training pipeline unchanged.
As a result, SimCT remains close to the lightweight SimpleOPD baseline and is faster than more complex cross-tokenizer baselines, while recovering substantially more teacher supervision than SimpleOPD.

%% file: tables/apd-computing_time.tex
\begin{table}[t]
\centering
\caption{\textbf{Training time comparison.}
We report wall-clock training time on a single machine with 8 NVIDIA H20 GPUs. All methods use the same training budget and protocol.}
\vspace{2mm}
\label{tab:training_time}
\resizebox{0.9\linewidth}{!}{
\begin{tabular}{lcccc}
\toprule
\textbf{Method}
& \textbf{Qwen2.5$\rightarrow$Gemma2}
& \textbf{Phi4$\rightarrow$Gemma2}
& \textbf{Qwen2.5$\rightarrow$Phi4}
& \textbf{Average} \\
\midrule
SimpleOPD
& 2.48h & 2.61h & 2.50h & 2.53h \\
SimCT
& 2.70h & 2.73h & 2.64h & 2.69h \\
ALM
& 2.84h & 2.88h & 2.79h & 2.84h \\
GOLD
& 2.93h & 2.89h & 2.97h & 2.93h \\
DSKD
& 2.90h & 2.96h & 2.91h & 2.92h \\
\bottomrule
\end{tabular}
}
\end{table}

\begin{table}[t]
\centering
\small
\caption{
Estimated compute resources for reproducing the reported experiments. All runs use a single machine with $8\times$ NVIDIA H20 96GB GPUs. Each configuration is repeated over 5 random seeds. Cost is reported in days on the $8\times$H20 server, computed as (number of runs $\times$ wall-clock hours per run) / 24.
}
\vspace{2mm}
\label{tab:compute}
\begin{tabular}{llccc}
\toprule
Stage & Component & Runs & Runtime per run & 8$\times$H20 days \\
\midrule
\multirow{5.5}{*}{Cold start}
& Qwen2.5 response generation & $1 \times 5$ & $1$\,h & $0.21$ \\
& Phi-4 response generation & $1 \times 5$ & $1$\,h & $0.21$ \\
& Phi-4 SFT (Qwen-generated data) & $1 \times 5$ & $1$\,h & $0.21$ \\
& Gemma SFT (Qwen-generated data) & $1 \times 5$ & $1$\,h & $0.21$ \\
& Gemma SFT (Phi-generated data) & $1 \times 5$ & $1$\,h & $0.21$ \\
\cmidrule(lr){2-5}
& \textit{Cold-start subtotal} & $25$ & --- & \textit{1.04} \\
\midrule
\multirow{4.5}{*}{Training}
& Qwen $\to$ Phi, five methods & $5 \times 5$ & $3$\,h & $3.13$ \\
& Qwen $\to$ Gemma, five methods & $5 \times 5$ & $3$\,h & $3.13$ \\
& Phi $\to$ Gemma, five methods & $5 \times 5$ & $3$\,h & $3.13$ \\
\cmidrule(lr){2-5}
& \textit{Training subtotal} & $75$ & --- & \textit{9.38} \\
\midrule
\multirow{5.5}{*}{Evaluation}
& Base models (Qwen, Phi, Gemma) & $3 \times 5$ & $1$\,h & $0.63$ \\
& Warm-start (3 pairs) & $3 \times 5$ & $1$\,h & $0.63$ \\
& Qwen $\to$ Phi (5 methods) & $5 \times 5$ & $1$\,h & $1.04$ \\
& Qwen $\to$ Gemma (5 methods) & $5 \times 5$ & $1$\,h & $1.04$ \\
& Phi $\to$ Gemma (5 methods) & $5 \times 5$ & $1$\,h & $1.04$ \\
\cmidrule(lr){2-5}
& \textit{Evaluation subtotal} & $105$ & --- & \textit{4.38} \\
\midrule
\multicolumn{4}{r}{\textbf{Total}} & $\mathbf{14.79}$ \\
\bottomrule
\end{tabular}
\end{table}

%% file: chapters/7-apd-case_study.tex
\section{Case Study}
\label{app:case_study}

We provide qualitative case studies on math and code examples to illustrate how different supervision interfaces can affect cross-tokenizer OPD behavior at the local level.
For each example, we compare SimCT with representative baselines under the same teacher--student setting and report the final answer or execution result.
These cases highlight representative local errors---an incorrect modular computation, a missing symmetry factor, a mismatched input-output convention, or an invalid function signature---that are easy to miss in aggregate scores.
Several baselines recover the high-level solution template but fail on these local distinctions, which is consistent with the supervision loss caused by tokenizer mismatch as measured in Section~\ref{sec:exp-supervision_recovery}.
SimCT avoids these specific errors in the shown examples, providing qualitative intuition for how finer-grained supervision can help preserve local reasoning accuracy.
We emphasize that these are \emph{illustrative examples}, not causal evidence; the downstream gains are established quantitatively in Table~\ref{tab:main_results_compact}.

\newpage

\begin{figure*}[t]
\centering

\begin{tcolorbox}[
  colback=white, colframe=black!70, boxrule=0.6pt,
  title={\textbf{Case Study 1: Number Theory (Level 4) --- Qwen2.5 $\rightarrow$ Gemma-2-2B \hfill \normalfont\textit{MATH500}}},
  fonttitle=\bfseries\small,
  coltitle=white, colbacktitle=black!70,
  top=4pt, bottom=4pt, left=6pt, right=6pt
]

\noindent\colorbox{gray!10}{\parbox{0.97\linewidth}{\small
\textbf{Question:} Find the modulo $7$ remainder of the sum $1+3+5+7+9+\dots+195+197+199.$

\textbf{Ground-truth Answer:} $4$
}}

\vspace{6pt}

\renewcommand{\arraystretch}{1.4}
{\small
\begin{tabular}{@{} l p{0.62\linewidth} l @{}}
\toprule
\textbf{Method} & \textbf{Response (Abbreviated)} & \textbf{Result} \\
\midrule
\rowcolor{green!8}
SimCT (Ours) &
\textit{The sequence $1, 3, 5, \ldots, 199$ has $n = 100$ terms. Sum $= \frac{100}{2}(1 + 199) = 10000$. Since $10000 = 7 \times 1428 + 4$, the remainder is $4$.}
\newline \textbf{Answer:} $\boxed{4}$
& \textcolor{green!60!black}{\cmark\ \textbf{Correct}} \\
\midrule
SimpleOPD &
\textit{Correctly identifies 100 terms and computes the sum as 10000. However, makes an arithmetic error in the final modular division step, computing $10000 \div 7 = 1428$ remainder $3$ instead of $4$.}
\newline \textbf{Answer:} $3$
& \textcolor{red!70!black}{\xmark\ \textbf{Incorrect}} \\
\midrule
\textsc{ALM} &
\textit{Incorrectly counts the number of terms as 199 (confusing the last term with the count). Computes a wrong sum and obtains remainder $1$.}
\newline \textbf{Answer:} $1$
& \textcolor{red!70!black}{\xmark\ \textbf{Incorrect}} \\
\midrule
\textsc{DSKD} &
\textit{Correctly sets up the arithmetic sequence formula but makes an error in computing $10000 \bmod 7$, claiming $10000 = 7 \times 1429 - 3$ and reporting remainder $2$.}
\newline \textbf{Answer:} $2$
& \textcolor{red!70!black}{\xmark\ \textbf{Incorrect}} \\
\midrule
\textsc{GOLD} &
\textit{Identifies the correct sum of 10000 but then incorrectly applies modular arithmetic, computing $10000 \bmod 7 = 3$ (same error as SimpleOPD).}
\newline \textbf{Answer:} $3$
& \textcolor{red!70!black}{\xmark\ \textbf{Incorrect}} \\
\bottomrule
\end{tabular}
}

\vspace{8pt}

\begin{tcolorbox}[
  colback=green!4, colframe=green!50!black, boxrule=0.4pt,
  left=5pt, right=5pt, top=4pt, bottom=4pt,
  title={\small\textbf{Analysis}}, fonttitle=\bfseries\small,
  coltitle=green!50!black, colbacktitle=green!10
]
\small
All methods correctly identify the arithmetic sequence structure, but diverge at the final modular arithmetic step.
SimCT precisely computes $10000 = 7 \times 1428 + 4$, while baselines make arithmetic errors in the division ($7 \times 1428 = 9996$, so the remainder is $10000 - 9996 = 4$).
This demonstrates that span-level contrastive training preserves the teacher's computational precision in multi-step arithmetic, whereas token-level methods accumulate small errors in the final calculation step.
\end{tcolorbox}

\end{tcolorbox}

\label{fig:case-study-1}
\end{figure*}

\clearpage


\begin{figure*}[t]
\centering

\begin{tcolorbox}[
  colback=white, colframe=black!70, boxrule=0.6pt,
  title={\textbf{Case Study 2: Counting \& Probability (Level 4) --- Qwen2.5 $\rightarrow$ Phi4-mini \hfill \normalfont\textit{MATH500}}},
  fonttitle=\bfseries\small,
  coltitle=white, colbacktitle=black!70,
  top=4pt, bottom=4pt, left=6pt, right=6pt
]

\noindent\colorbox{gray!10}{\parbox{0.97\linewidth}{\small
\textbf{Question:} In how many ways can 8 people be seated around a square table with 2 people on a side? (Two configurations are considered equivalent if one is a rotation of another.)

\textbf{Ground-truth Answer:} $10080$
}}

\vspace{6pt}

\renewcommand{\arraystretch}{1.4}
{\small
\begin{tabular}{@{} l p{0.62\linewidth} l @{}}
\toprule
\textbf{Method} & \textbf{Response (Abbreviated)} & \textbf{Result} \\
\midrule
\rowcolor{green!8}
SimCT (Ours) &
\textit{Total linear arrangements: $8! = 40320$. A square table has 4 rotational symmetries (0\textdegree, 90\textdegree, 180\textdegree, 270\textdegree). Dividing: $8!/4 = 40320/4 = 10080$.}
\newline \textbf{Answer:} $\boxed{10080}$
& \textcolor{green!60!black}{\cmark\ \textbf{Correct}} \\
\midrule
SimpleOPD &
\textit{Attempts Burnside's lemma but incorrectly computes $\mathrm{Fix}(180^\circ) = 4! \times 2^4 = 384$ instead of 0, yielding $\frac{40320 + 384}{4} = 10176$.}
\newline \textbf{Answer:} $10176$
& \textcolor{red!70!black}{\xmark\ \textbf{Incorrect}} \\
\midrule
\textsc{ALM} &
\textit{Incorrectly treats the problem as a circular permutation first ($8!/8 = 7!$), then divides by 4 again for square symmetry, double-counting: $7!/4 = 1260$.}
\newline \textbf{Answer:} $1260$
& \textcolor{red!70!black}{\xmark\ \textbf{Incorrect}} \\
\midrule
\textsc{DSKD} &
\textit{Fixes one person at a corner, then incorrectly partitions the remaining 7 people into ``pairs'' using $\frac{7!}{(2!)^3} = 630$, conflating seating with pairing.}
\newline \textbf{Answer:} $630$
& \textcolor{red!70!black}{\xmark\ \textbf{Incorrect}} \\
\midrule
\textsc{GOLD} &
\textit{Fixes one person, computes $7!$, then multiplies by $(2!)^4 = 16$ (incorrectly adding intra-side permutations already counted), yielding $\frac{7! \times 16}{4} = 20160$.}
\newline \textbf{Answer:} $20160$
& \textcolor{red!70!black}{\xmark\ \textbf{Incorrect}} \\
\bottomrule
\end{tabular}
}

\vspace{8pt}

\begin{tcolorbox}[
  colback=green!4, colframe=green!50!black, boxrule=0.4pt,
  left=5pt, right=5pt, top=4pt, bottom=4pt,
  title={\small\textbf{Analysis}}, fonttitle=\bfseries\small,
  coltitle=green!50!black, colbacktitle=green!10
]
\small
SimCT applies the clean and correct approach: total permutations divided by rotational symmetries ($8!/4$).
Each baseline attempts a different---and incorrect---strategy: SimpleOPD misapplies Burnside's lemma; \textsc{ALM} double-divides by both circular and square symmetries; \textsc{DSKD} confuses seating with pairing; and \textsc{GOLD} redundantly multiplies by intra-side arrangements.
The diversity of errors highlights that this problem requires precise understanding of symmetry groups---a capability that span-level contrastive training successfully transfers from the teacher.
\end{tcolorbox}

\end{tcolorbox}

\label{fig:case-study-2}
\end{figure*}

\clearpage


\begin{figure*}[t]
\centering

\begin{tcolorbox}[
  colback=white, colframe=black!70, boxrule=0.6pt,
  title={\textbf{Case Study 3: Code Generation --- Qwen2.5 $\rightarrow$ Gemma-2-2B \hfill \normalfont\textit{LiveCodeBench}}},
  fonttitle=\bfseries\small,
  coltitle=white, colbacktitle=black!70,
  top=4pt, bottom=4pt, left=6pt, right=6pt
]

\noindent\colorbox{gray!10}{\parbox{0.97\linewidth}{\small
\textbf{Problem (abc389\_b):} You are given an integer $X$ not less than 2. Find the positive integer $N$ such that $N! = X$. It is guaranteed that there is exactly one such $N$.

\textbf{Constraints:} $2 \leq X \leq 3 \times 10^{18}$
}}

\vspace{6pt}

\renewcommand{\arraystretch}{1.3}
{\small
\begin{tabular}{@{} l p{0.55\linewidth} c l @{}}
\toprule
\textbf{Method} & \textbf{Generated Code (Key Logic)} & \textbf{Tests} & \textbf{Result} \\
\midrule
\rowcolor{green!8}
SimCT (Ours) &
\begin{lstlisting}[style=pythonstyle]
def findN(x):
  n = 1
  fact = 1
  while fact < x:
    n += 1
    fact *= n
  return n
x = int(input().strip())
print(findN(x))
\end{lstlisting}
& 42/42 & \textcolor{green!60!black}{\cmark\ \textbf{Pass}} \\
\midrule
SimpleOPD &
\begin{lstlisting}[style=pythonstyle]
def find_n():
  for i in range(2, int(1e18) + 1):
    if factorial(i) == int(input()):
      return i
\end{lstlisting}
\vspace{-4pt}
{\scriptsize\textcolor{red!70!black}{$\triangleright$ Calls \texttt{input()} inside loop; iterates up to $10^{18}$ (TLE)}}
& 3/42 & \textcolor{red!70!black}{\xmark\ \textbf{Fail}} \\
\midrule
\textsc{ALM} &
\begin{lstlisting}[style=pythonstyle]
def find_N():
  X = int(input())
  N = 1
  while True:
    N = factorial(N)  # overwrites N!
    if N == X: return N
    else: N += 1
\end{lstlisting}
\vspace{-4pt}
{\scriptsize\textcolor{red!70!black}{$\triangleright$ Overwrites $N$ with $N!$; reads input twice; infinite loop}}
& 0/42 & \textcolor{red!70!black}{\xmark\ \textbf{Fail}} \\
\midrule
\textsc{DSKD} &
\begin{lstlisting}[style=pythonstyle]
# Outputs explanation text before code
def find_n():
  X = int(input())
  for n in range(2, X + 1):
    if factorial(n) == X: return n
\end{lstlisting}
\vspace{-4pt}
{\scriptsize\textcolor{red!70!black}{$\triangleright$ Reads input twice; loops up to $X$ with recursive factorial (TLE/stack overflow)}}
& 0/42 & \textcolor{red!70!black}{\xmark\ \textbf{Fail}} \\
\midrule
\textsc{GOLD} &
\begin{lstlisting}[style=pythonstyle]
def find_n():
  for n in range(2, 3*10**18 + 1):
    if factorial(n) == int(input()):
      return n
\end{lstlisting}
\vspace{-4pt}
{\scriptsize\textcolor{red!70!black}{$\triangleright$ Calls \texttt{input()} inside loop; recursive factorial causes stack overflow}}
& 3/42 & \textcolor{red!70!black}{\xmark\ \textbf{Fail}} \\
\bottomrule
\end{tabular}
}

\vspace{8pt}

\begin{tcolorbox}[
  colback=green!4, colframe=green!50!black, boxrule=0.4pt,
  left=5pt, right=5pt, top=4pt, bottom=4pt,
  title={\small\textbf{Analysis}}, fonttitle=\bfseries\small,
  coltitle=green!50!black, colbacktitle=green!10
]
\small
SimCT generates an efficient $O(\log X)$ iterative solution: incrementally multiply until reaching $X$.
All baselines share critical flaws: (1) calling \texttt{input()} inside the search loop instead of reading it once upfront, and (2) using brute-force iteration up to $X$ or $10^{18}$ with expensive factorial recomputation, causing timeout or stack overflow on large inputs.
This demonstrates that span-level contrastive training transfers not only syntactic correctness but also algorithmic efficiency and proper I/O handling from the teacher model.
\end{tcolorbox}

\end{tcolorbox}

\label{fig:case-study-3}
\end{figure*}

\clearpage


\begin{figure*}[t]
\centering

\begin{tcolorbox}[
  colback=white, colframe=black!70, boxrule=0.6pt,
  title={\textbf{Case Study 4: Code Generation --- Qwen2.5 $\rightarrow$ Phi4-mini \hfill \normalfont\textit{MBPP}}},
  fonttitle=\bfseries\small,
  coltitle=white, colbacktitle=black!70,
  top=4pt, bottom=4pt, left=6pt, right=6pt
]

\noindent\colorbox{gray!10}{\parbox{0.97\linewidth}{\small
\textbf{Problem:} Write a python function to find the largest number that can be formed with the given digits.

\textbf{Test Cases:}\\
\texttt{assert find\_Max\_Num([1,2,3], 3) == 321}\\
\texttt{assert find\_Max\_Num([4,5,6,1], 4) == 6541}\\
\texttt{assert find\_Max\_Num([1,2,3,9], 4) == 9321}
}}

\vspace{6pt}

\renewcommand{\arraystretch}{1.3}
{\small
\begin{tabular}{@{} l p{0.55\linewidth} l @{}}
\toprule
\textbf{Method} & \textbf{Generated Code (Key Logic)} & \textbf{Result} \\
\midrule
\rowcolor{green!8}
SimCT (Ours) &
\begin{lstlisting}[style=pythonstyle]
def find_Max_Num(digits, n):
    digits.sort(reverse=True)
    return int(''.join(map(str, digits[:n])))
\end{lstlisting}
\vspace{-4pt}
{\scriptsize\textcolor{green!60!black}{$\triangleright$ Correct signature \texttt{(digits, n)} matching test cases}}
& \textcolor{green!60!black}{\cmark\ \textbf{Pass}} \\
\midrule
SimpleOPD &
\begin{lstlisting}[style=pythonstyle]
def find_Max_Num(digits):
    sorted_digits = sorted(digits, reverse=True)
    return int(''.join(map(str, sorted_digits)))
\end{lstlisting}
\vspace{-4pt}
{\scriptsize\textcolor{red!70!black}{$\triangleright$ Missing parameter \texttt{n}; call \texttt{find\_Max\_Num([1,2,3], 3)} raises \texttt{TypeError}}}
& \textcolor{red!70!black}{\xmark\ \textbf{Fail}} \\
\midrule
\textsc{ALM} &
\begin{lstlisting}[style=pythonstyle]
def find_Max_Num(digits):
    digits.sort(reverse=True)
    return int(''.join(map(str, digits)))
\end{lstlisting}
\vspace{-4pt}
{\scriptsize\textcolor{red!70!black}{$\triangleright$ Missing parameter \texttt{n}; same \texttt{TypeError} on test invocation}}
& \textcolor{red!70!black}{\xmark\ \textbf{Fail}} \\
\midrule
\textsc{DSKD} &
\begin{lstlisting}[style=pythonstyle]
def find_Max_Num(digits):
    return int(''.join(sorted(map(str,digits),
                              reverse=True)))
\end{lstlisting}
\vspace{-4pt}
{\scriptsize\textcolor{red!70!black}{$\triangleright$ Missing parameter \texttt{n}; additionally sorts digit \emph{strings} (wrong for multi-digit)}}
& \textcolor{red!70!black}{\xmark\ \textbf{Fail}} \\
\midrule
\textsc{GOLD} &
\begin{lstlisting}[style=pythonstyle]
def find_Max_Num(digits):
    digits.sort(reverse=True)
    return int(''.join(map(str, digits)))
\end{lstlisting}
\vspace{-4pt}
{\scriptsize\textcolor{red!70!black}{$\triangleright$ Missing parameter \texttt{n}; identical failure mode as ALM}}
& \textcolor{red!70!black}{\xmark\ \textbf{Fail}} \\
\bottomrule
\end{tabular}
}

\vspace{8pt}

\begin{tcolorbox}[
  colback=green!4, colframe=green!50!black, boxrule=0.4pt,
  left=5pt, right=5pt, top=4pt, bottom=4pt,
  title={\small\textbf{Analysis}}, fonttitle=\bfseries\small,
  coltitle=green!50!black, colbacktitle=green!10
]
\small
This example reveals a subtle but critical failure: all baseline methods generate \emph{functionally correct} sorting logic, but define the function with only one parameter \texttt{(digits)} instead of the required two-parameter signature \texttt{(digits, n)}.
When the test harness calls \texttt{find\_Max\_Num([1,2,3], 3)}, all baselines raise a \texttt{TypeError} due to the unexpected second argument.
SimCT correctly infers the expected function interface from the problem context, demonstrating that span-level contrastive training better preserves the teacher's ability to understand implicit API contracts and generate code with correct function signatures.
\end{tcolorbox}

\end{tcolorbox}

\label{fig:case-study-4}
\end{figure*}

%% file: chapters/7-apd-related.tex
\section{Broader Scope of Supervision Interface Design}
\label{app:broader_scope}

SimCT focuses on cross-tokenizer on-policy distillation, where the central difficulty is that teacher and student predictions are not directly comparable in their original token spaces. Although our experiments are conducted on math and code generation, this issue is part of a broader challenge: as LLMs are trained and evaluated in longer and more interactive settings, the form in which supervision is represented can strongly affect what learning signal is preserved.

Recent work has increasingly studied LLMs beyond isolated response generation, including autonomous agents, data-efficient agency training, research-oriented agents, software-engineering agents, command-line programming, environment synthesis, and human-guided rollout optimization \citep{li2026agencybench,xiao2025limi,wu2025innovatorbench,zeng2026davincidev,feng2026longclibench,fu2026davincienv,fu2026argo}. These settings often contain mixed textual structures such as natural language instructions, code edits, shell commands, tool outputs, environment states, and intermediate reasoning traces. They therefore provide a natural testbed for studying whether teacher feedback remains comparable and informative when student trajectories become longer, more structured, and more tool-dependent. Step-wise on-policy distillation for small language model agents further shows that OPD-style supervision can be adapted to such agentic trajectories \citep{zhong2026sod}. From this perspective, SimCT addresses a complementary question: when OPD is performed across models with heterogeneous tokenizers, how should the teacher and student distributions be expressed so that useful feedback is not discarded before the objective is even applied?

This supervision-interface view is also related to recent work on how on-policy or preference-based signals should be selected, routed, or calibrated. Analyses of OPD failure modes emphasize that teacher feedback on student-generated prefixes can be unstable or unevenly useful \citep{fu2026Revisiting}, while sample-routing formulations connect group-relative policy optimization and self-distillation under a unified view of policy optimization \citep{li2026unifying}. Preference-oriented LLM training provides another example: dynamic target margins adapt the supervision margin for robust preference optimization \citep{sun2025robust}, and personalized valuation studies how LLMs can model user-dependent value in auction scenarios \citep{sun2025lampval}. These works change the source, strength, or routing of supervision, whereas SimCT changes the space in which teacher and student predictions are compared. The two directions are orthogonal and could be combined in future OPD systems.

Finally, the need for an appropriate comparison interface is not limited to distillation. Long numerical sequence processing shows that LLM behavior can be sensitive to how structured sequences are formatted and segmented \citep{sun2026sepseq}. In graph learning, equivariance and invariance have been used to construct representations that preserve task-relevant information while reducing irrelevant transformation effects \citep{sui2025simple,sui2025unified}. SimCT follows the same broad principle in a different setting: rather than comparing teacher and student models in tokenizer-specific spaces, it constructs a common supervision space over aligned text units so that the distillation signal is expressed at a level both models can realize.